\documentclass{article}

\PassOptionsToPackage{square,numbers,sort&compress}{natbib}
\usepackage[preprint]{neurips_2026}

\usepackage{hyperref}
\usepackage{url}
\usepackage{amsmath}
\usepackage{algorithm}
\usepackage{algpseudocode}
\usepackage{bm}
\usepackage{amssymb}
\usepackage{amsthm}
\usepackage{xspace}
\usepackage{graphicx}
\usepackage{multirow} 
\usepackage{booktabs}
\usepackage{wrapfig}
\usepackage[most]{tcolorbox}

\usepackage{times}
\usepackage{soul}
\usepackage[utf8]{inputenc}
\usepackage[small]{caption}
\usepackage{graphicx}
\usepackage{booktabs}

\usepackage{booktabs,tabularx}

\usepackage[compact]{titlesec}

\usepackage{natbib}

\usepackage{tablefootnote}

\theoremstyle{plain}
\newtheorem{theorem}{Theorem}
\newtheorem{lemma}{Lemma}

\newtheorem{definition}{Definition}
\newtheorem{corollary}{Corollary}

\newtheorem{innercustomthm}{Theorem}
\newenvironment{customthm}[1]
  {\renewcommand\theinnercustomthm{#1}\innercustomthm}
  {\endinnercustomthm}

\newtheorem*{theorem*}{Theorem}
\newtheorem*{proposition*}{Proposition}
\newtheorem*{lemma*}{Lemma}
\newtheorem*{property*}{Property}
\newtheorem*{definition*}{Definition}
\newtheorem*{corollary*}{Corollary}

\theoremstyle{definition}

\newcommand{\method}{MAQEE\xspace}
\newcommand{\methodfull}{Mutual Adaptive Quantization with Early Exiting\xspace}

\usepackage[dvipsnames]{xcolor}

\newcommand{\vparagraph}[1]{
  {\noindent \textbf{#1}}
}

\newtcolorbox{takeaway}{
  colback=gray!10,
  colframe=gray!50,
  boxrule=0.5pt,
  arc=3pt,
  left=6pt,
  right=6pt,
  top=6pt,
  bottom=6pt
}

\hypersetup{
  colorlinks=true,
  linkcolor=blue,
  citecolor=brown,
  urlcolor=magenta,
  breaklinks=true
}
\title{Amortized-Precision Quantization for Early-Exit Vision Transformers}

\author{Rui Fang \\
National Taiwan University \\
\texttt{rfang@arbor.ee.ntu.edu.tw} \\
\And
Hsi-Wen Chen \\
National Taiwan University \\
\texttt{hwchen@arbor.ee.ntu.edu.tw} \\
\And
Ming-Syan Chen \\
National Taiwan University \\
\texttt{mschen@ntu.edu.tw} \\
}

\begin{document}

\maketitle
\begin{abstract}
Vision Transformers (ViTs) achieve strong performance across vision tasks, yet their deployment with low-precision early exiting remains fragile. Existing quantization methods assume static full-depth execution, making them unstable when exit decisions are perturbed by quantization noise, which can amplify errors along dynamic inference paths. In this paper, we introduce \textbf{Amortized-Precision Quantization (APQ)}, a utilization-aware formulation that accounts for layer-wise stochastic exposure to quantization noise and reveals \textit{depth--precision} trade-offs. Building on APQ, we propose \textbf{Mutual Adaptive Quantization with Early Exiting (MAQEE)}, a bi-level framework that jointly optimizes exit thresholds and bit-widths under explicit risk control to improve inference stability. MAQEE establishes a superior \textit{Pareto frontier} in the accuracy--efficiency trade-off, reducing BOPs by up to 95\% while maintaining accuracy and outperforming strong baselines by up to 20\% across classification, detection, and segmentation tasks.
\end{abstract}

\section{Introduction}
\label{sec:introduction}
Vision Transformers (ViTs)~\citep{dosovitskiy2020image} achieve strong performance across classification~\citep{chen2021crossvit}, detection~\citep{carion2020end}, and segmentation~\citep{gao2021utnet}, but their high computational cost limits deployment on edge devices~\citep{zheng2023lightweight,shang2024quantized}. Quantization~\citep{courbariaux2015binaryconnect} improves efficiency by reducing weight and activation precision. Early approaches adopt \textit{Fixed-Precision Quantization (FPQ)}~\citep{krishnamoorthi2018quantizing,jacob2018quantization,yang2019quantization}, but uniform bit-width assignment ignores the heterogeneous sensitivity of ViT layers to quantization noise~\citep{liu2021layer,tai2024mptq}. \textit{Mixed-Precision Quantization (MPQ)}~\citep{xiao2023patch,jeon2024usdn} partially addresses this issue by assigning layer-wise bit-widths based on sensitivity, yet remains designed for static, full-depth inference.

However, modern ViT architectures increasingly rely on \textit{dynamic inference}, such as Early Exiting (EE)~\citep{teerapittayanon2016branchynet,laskaridis2021adaptive,xu2023lgvit}, where lightweight exit heads enable input-dependent termination to reduce latency. This creates a mismatch with conventional quantization: MPQ calibrates precision under a fixed execution path, whereas EE induces stochastic, input-dependent depths~\citep{rahmath2024early}. As shown in the left panel of Fig.~\ref{fig:maqee}, quantization noise can perturb confidence estimates for early exiting, leading to either \textit{premature} or \textit{delayed} exits. Our preliminary results show this effect: quantizing EE-trained models causes up to a 50\% accuracy drop, while applying EE to already quantized models makes it difficult to properly train exit heads, resulting in limited efficiency gains. Moreover, training exit heads directly on a quantized model is infeasible, since low-precision weights provide insufficient flexibility to learn reliable exit decisions.\footnote{Mixture of Experts (MoE)~\citep{riquelme2021scaling,hwang2023tutel} is another dynamic inference paradigm, but unlike EE, it selects a fixed sub-network per input rather than changing the executed depth. Thus, EE is more sensitive to quantization error.}

\begin{figure*}[t]
    \centering
    \includegraphics[width=\textwidth]{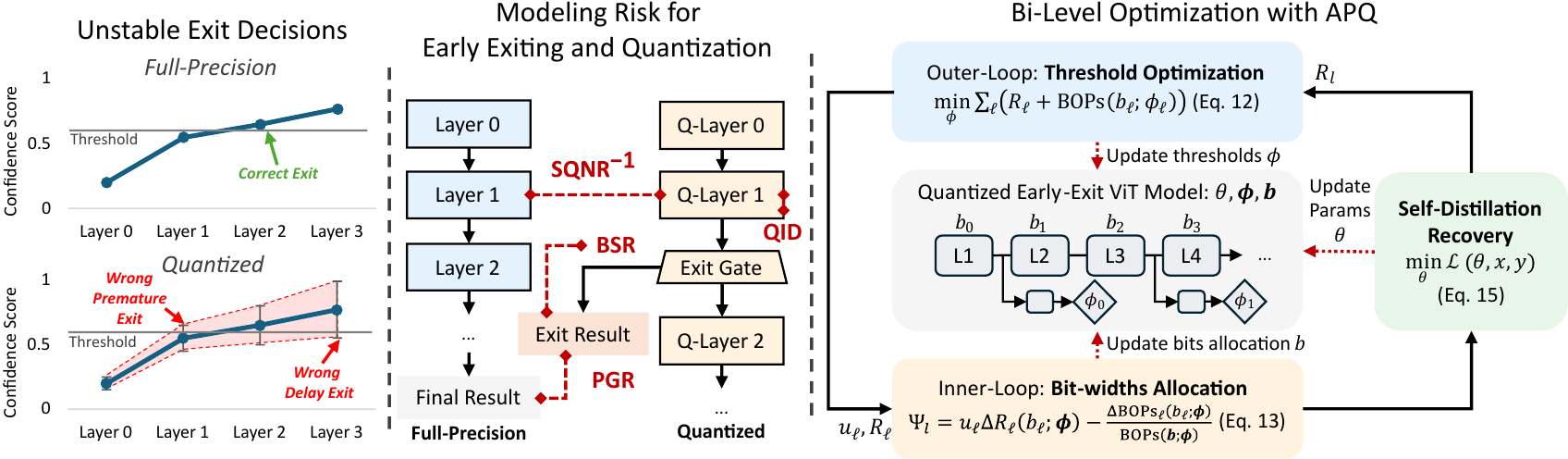} 
    \caption{\textbf{Overview of \method.} \textit{Left}: Quantization error perturbs early-exit decisions, causing either premature or delayed exits. \textit{Middle}: Risk modeling for early exiting and quantization, including performance gap risk (PGR), boundary sensitivity risk (BSR), inverse SQNR, and quantization-induced drift (QID). \textit{Right}: \method solves Amortized-Precision Quantization (APQ) via bi-level optimization, where exit thresholds and layer-wise bit-widths are jointly optimized, with self-distillation for accuracy recovery.}
\label{fig:maqee}
\end{figure*}

Motivated by these observations, we propose \textbf{Amortized-Precision Quantization (APQ)}, a quantization paradigm tailored to early-exit inference. Unlike conventional approaches that assume a fixed execution path, APQ treats precision as a resource whose impact depends on dynamic layer utilization. This induces a utilization-aware depth--precision trade-off: precision allocation should account for how often each layer is reached, how its quantization error affects subsequent inference paths, and how it may mislead exit decisions. Our theoretical analysis shows that static precision allocation fails to capture these effects, explaining the robustness degradation observed when quantization is naively combined with early exiting.

To effectively optimize APQ, we introduce \textbf{\methodfull (\method)}, a bi-level optimization framework that jointly optimizes exit thresholds and layer-wise bit-widths, as illustrated in the middle and right panels of Figure~\ref{fig:maqee}. \method models instability from both exit-decision uncertainty and representation distortion, using four risk terms to capture performance degradation, boundary sensitivity, quantization noise, and distributional drift. These risks are aggregated into a unified risk score that guides the alternating optimization of thresholds and bit-widths within a bi-level iterative procedure. With lightweight self-distillation and efficient warm-starting, \method improves low-precision recovery and accelerates convergence.

\textbf{Our contributions are summarized as follows.} First, we formalize the incompatibility between quantization and early exiting, and propose Amortized-Precision Quantization (APQ) to characterize inference complexity under dynamic execution. Second, we propose \methodfull (\method), a bi-level framework that jointly optimizes exit thresholds and bit-width allocation through risk-aware control. Third, we theoretically explain why static FPQ and MPQ are mismatched with early exiting and show that \method enables more stable utilization-aware optimization. Finally, experiments across three computer vision tasks show that \method forms a superior \textit{Pareto frontier} in the accuracy--efficiency trade-off, reducing BOPs by up to 95\% while preserving full-precision accuracy and outperforming state-of-the-art methods by up to 20\%.
\section{Problem Formulation}
\label{sec:problem}
This section introduces \textbf{Amortized-Precision Quantization (APQ)} and formulates its corresponding optimization problem. A detailed notation table is provided in Appendix~\ref{apx:notation}.

\vparagraph{Quantization.}
Quantization replaces floating-point operations with low-bit integer arithmetic to reduce memory footprint and inference cost~\citep{jacob2018quantization}. Given bit-width $b$, a scale factor $s$, and a zero-point $t$, values are mapped to a $b$-bit integer range, e.g., $[-2^{b-1},2^{b-1}-1]$ for signed quantization, and dequantization approximates the original value by $s(x^q-t)$. Existing strategies include \textit{Fixed-Precision Quantization (FPQ)}~\citep{jacob2018quantization,yang2019quantization}, which uses a uniform bit-width, and \textit{Mixed-Precision Quantization (MPQ)}~\citep{xiao2023patch,jeon2024usdn}, which assigns sensitivity-aware layer-wise bit-widths.

Consider a network with $L$ layers and a set of available bit-widths $\mathcal{B}$, e.g., $\{2,4,8\}$. Each layer $\ell$ is assigned a bit-width $b_\ell\in\mathcal{B}$, giving an allocation $\mathbf{b}=\{b_\ell\}_{\ell=1}^L$. Let $c_\ell(b_\ell)$ denote the computational cost of layer $\ell$ under bit-width $b_\ell$. We measure cost by \textit{bit operations (BOPs)}, commonly defined as $\mathrm{FLOPs}\times B_w\times B_a$, where $B_w$ and $B_a$ are the weight and activation bit-widths~\citep{shang2024quantized,chen2025moequant}. Since conventional FPQ and MPQ assume that all inputs traverse all $L$ layers, their expected complexity depends only on the static allocation $\mathbf{b}$, following a deterministic full-depth path.

\vparagraph{Early Exiting.} 
Early exiting shortens inference by attaching auxiliary classifiers, or exit heads, to intermediate layers, allowing samples to terminate once predictions become sufficiently confident~\citep{schuster2022confident}. This induces \emph{stochastic, input-dependent inference depths}: easy samples exit early, while harder ones propagate deeper. Formally, let $z_\ell(x)$ denote the logits at layer $\ell$, and define the exit confidence as $\rho_\ell(x):=\max_j \mathrm{softmax}(z_\ell(x))_j$. An input exits at the first layer $\ell$ such that $\rho_\ell(x)\geq \phi_\ell$, where $\phi_\ell\in[0,1]$ is the exit threshold; otherwise, it continues until layer $L$, which serves as the fallback exit. The exit policy is parameterized by $\boldsymbol{\phi}=\{\phi_\ell\}_{\ell=1}^L$.

To quantify stochastic execution under early exiting, we define the utilization indicator $u_\ell(x,\boldsymbol{\phi})=\mathbf{1}\{x \text{ reaches layer }\ell \mid \boldsymbol{\phi}\}$, and its expectation over the data distribution $\mathcal{D}$, $u_\ell(\boldsymbol{\phi})=\mathbb{E}_{x\sim\mathcal{D}}[u_\ell(x,\boldsymbol{\phi})]$, which measures the probability that layer $\ell$ is activated. The objective of early exiting is to reduce computation cost while preserving accuracy; however, its exit behavior fundamentally alters how quantization noise accumulates and propagates across layers.

\vparagraph{Amortized-Precision Quantization (APQ).}
Conventional quantization assumes a fixed full-depth inference path, where all layers are executed for every input. Under early exiting, however, execution depth becomes stochastic and input-dependent, so quantization noise is exposed and propagated only along realized inference paths. This creates a mismatch for static bit allocation: layers with different execution frequencies may have different effective precision requirements. To address this, we propose \emph{Amortized-Precision Quantization (APQ)}, which accounts for layer-wise execution frequency when allocating precision under early exiting.

\begin{definition}[Amortized-Precision Quantization (APQ)]\label{def:apq}
Consider a network with $L$ layers, where each layer $\ell$ is assigned a bit-width $b_\ell \in \mathcal{B}$ from the available bit-width set $\mathcal{B}$. The amortized computational complexity of a bit allocation $\boldsymbol{b}$ under an early-exit policy $\boldsymbol{\phi}$ is defined as
\begingroup\small
\begin{equation}
C_{\text{APQ}}(\boldsymbol{b}, \boldsymbol{\phi}) = \sum_{\ell=1}^L
\mathbb{E}_{x \sim \mathcal{D}} \!\left[ u_\ell(x, \boldsymbol{\phi}) \cdot c_\ell(b_\ell) \right] 
= \sum_{\ell=1}^L u_\ell(\boldsymbol{\phi}) \cdot c_\ell(b_\ell),
\end{equation}
\endgroup
where $c_\ell(b_\ell)$ is the cost of layer $\ell$ under bit-width $b_\ell$, $u_\ell(x,\boldsymbol{\phi})\in\{0,1\}$ indicates whether layer $\ell$ is executed for input $x$, and $u_\ell(\boldsymbol{\phi})=\mathbb{E}_{x\sim\mathcal{D}}[u_\ell(x,\boldsymbol{\phi})]$ is its utilization under the exit policy.
\end{definition}

Since $u_\ell(\boldsymbol{\phi})$ depends on exit thresholds, APQ naturally leads to a joint optimization of bit-widths $\boldsymbol{b}$ and exit policy $\boldsymbol{\phi}$ to balance accuracy and amortized low-bit cost:
\begingroup\small
\begin{equation}\label{eqn:objective}
(\boldsymbol{b}^*, \boldsymbol{\phi}^*) 
= \arg\min_{\boldsymbol{b},\,\boldsymbol{\phi}} \;
\mathbb{E}_{(x,y) \sim \mathcal{D}}
\!\left[ \mathcal{L}\big(f_{\theta,\boldsymbol{b},\boldsymbol{\phi}}(x), y\big) \right] 
+ \lambda C_{\text{APQ}}(\boldsymbol{b},\boldsymbol{\phi}),
\end{equation}
\endgroup
where $\mathcal{L}$ denotes the task-specific loss, $f_{\theta,\boldsymbol{b},\boldsymbol{\phi}}$ denotes the quantized early-exit network, and $\lambda$ controls the accuracy--efficiency trade-off. More broadly, APQ provides a utilization-aware view for dynamic inference paradigms such as recursive Transformers~\cite{shen2022sliced}, MoE routing~\cite{zhou2022mixture}, and layer skipping~\cite{elhoushi2024layerskip}, where input-dependent computation induces non-uniform exposure of layers or modules to quantization noise.
\section{Why Static Precision Allocation Fails under Dynamic Inference}
\label{sec:theory}

This section analyzes the \emph{mutual interference} between quantization and early exiting under stochastic low-bit execution. Because early exiting induces input-dependent stopping depths, quantization affects not only predictions but also executed-depth distributions, while early exiting in turn changes each layer's exposure to quantization noise. This random-stopping view explains why static precision allocation is mismatched with dynamic inference.

\vparagraph{Quantization Perturbs the Exit Policy.}
For each layer $\ell$, let $\Delta_\ell(x)=\rho_\ell(x)-\phi_\ell$ denote the full-precision confidence margin relative to the exit threshold $\phi_\ell$. Let $\tilde{\rho}_{\ell,\boldsymbol{b}}(x)$ denote the confidence after quantization under bit allocation $\boldsymbol{b}$, and define the induced confidence perturbation as
\begingroup\small
\begin{equation}
\xi_{\ell,\boldsymbol{b}}(x)
=
\tilde{\rho}_{\ell,\boldsymbol{b}}(x)-\rho_\ell(x).
\end{equation}
\endgroup
We define the full-precision and quantized exit decisions as $e_\ell(x)=\mathbf{1}\{\rho_\ell(x)\geq\phi_\ell\}$ and $\tilde e_\ell(x)=\mathbf{1}\{\tilde{\rho}_{\ell,\boldsymbol{b}}(x)\geq\phi_\ell\}$, respectively. The two decisions can differ only when the perturbation crosses the confidence boundary: if $\tilde e_\ell(x)\neq e_\ell(x)$, then $|\Delta_\ell(x)|\leq |\xi_{\ell,\boldsymbol{b}}(x)|$. Thus, exit instability depends jointly on the margin distribution around the threshold and the magnitude of the quantization-induced confidence perturbation.

\begin{theorem}
\label{thm:margin_exit_instability}
Assume that the confidence perturbation $\xi_{\ell,\boldsymbol{b}}(x)$ satisfies the tail bound
\begingroup\small
\begin{equation*}
\Pr\!\left(|\xi_{\ell,\boldsymbol{b}}(x)|>\tau_\ell\right)
\leq
q_\ell(\tau_\ell,b_\ell),
\end{equation*}
\endgroup
where $q_\ell(\tau_\ell,b_\ell)$ is non-increasing in $\tau_\ell$ and non-decreasing as $b_\ell$ decreases. Then, for any tolerance $\tau_\ell>0$, the mis-exit probability satisfies
\begingroup\small
\begin{equation*}
\Pr\!\left(\tilde e_\ell(x)\neq e_\ell(x)\right)
\leq
\Pr\!\left(|\Delta_\ell(x)|\leq\tau_\ell\right)
+
q_\ell(\tau_\ell,b_\ell).
\end{equation*}
\endgroup
\end{theorem}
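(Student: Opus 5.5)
The plan is a direct event-inclusion argument followed by a union bound. Everything needed is already in the observation stated just before the theorem: if $\tilde e_\ell(x)\neq e_\ell(x)$, then $|\Delta_\ell(x)|\leq|\xi_{\ell,\boldsymbol{b}}(x)|$. I will first re-derive this briefly, since it is the only structural step. If $e_\ell(x)=1$ and $\tilde e_\ell(x)=0$, then $\rho_\ell(x)\geq\phi_\ell>\tilde\rho_{\ell,\boldsymbol{b}}(x)$. This gives $0\leq\Delta_\ell(x)<-\xi_{\ell,\boldsymbol{b}}(x)$. The reverse case gives $\rho_\ell(x)<\phi_\ell\leq\tilde\rho_{\ell,\boldsymbol{b}}(x)$, so $0<-\Delta_\ell(x)\leq\xi_{\ell,\boldsymbol{b}}(x)$. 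In both cases $|\Delta_\ell(x)|\leq|\xi_{\ell,\boldsymbol{b}}(x)|$.

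Next, I fix an arbitrary tolerance $\tau_\ell>0$ and split the mis-exit event according to whether the perturbation is small or large:
\begin{equation*}
\{\tilde e_\ell(x)\neq e_\ell(x)\}\subseteq\bigl(\{\tilde e_\ell(x)\neq e_\ell(x)\}\cap\{|\xi_{\ell,\boldsymbol{b}}(x)|\leq\tau_\ell\}\bigr)\cup\{|\xi_{\ell,\boldsymbol{b}}(x)|>\tau_\ell\}.
\end{equation*}
On the first piece, the inclusion from the previous paragraph gives $|\Delta_\ell(x)|\leq|\xi_{\ell,\boldsymbol{b}}(x)|\leq\tau_\ell$. Hence the first piece is contained in $\{|\Delta_\ell(x)|\leq\tau_\ell\}$.

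Finally, I take probabilities over $x\sim\mathcal{D}$, together with any randomness in the quantizer, and apply subadditivity. This gives
\begin{equation*}
\Pr(\tilde e_\ell\neq e_\ell)\leq\Pr(|\Delta_\ell|\leq\tau_\ell)+\Pr(|\xi_{\ell,\boldsymbol{b}}|>\tau_\ell)\leq\Pr(|\Delta_\ell|\leq\tau_\ell)+q_\ell(\tau_\ell,b_\ell),
\end{equation*}
where the last step is the assumed tail bound. The monotonicity assumptions on $q_\ell$ are not needed for the inequality itself. They only support the interpretation afterwards: lowering $b_\ell$ loosens the second term, while enlarging $\tau_\ell$ trades the second term against the margin mass near the threshold.

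There is no real obstacle here. The only care required is the boundary convention in the exit rule ($\geq$ versus $>$) when deriving $|\Delta_\ell|\leq|\xi_{\ell,\boldsymbol{b}}|$. I would check both crossing directions explicitly, as done above, so that the non-strict inequality holds in each case. I would also confirm that both probabilities are taken under the same joint distribution.
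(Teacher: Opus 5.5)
Your proposal is correct and matches the paper's proof: the inclusion $\{\tilde e_\ell(x)\neq e_\ell(x)\}\subseteq\{|\Delta_\ell(x)|\leq\tau_\ell\}\cup\{|\xi_{\ell,\boldsymbol{b}}(x)|>\tau_\ell\}$ follows from the crossing condition $|\Delta_\ell(x)|\leq|\xi_{\ell,\boldsymbol{b}}(x)|$, and the result then comes from the union bound and the assumed tail bound. Your explicit two-case check of the crossing condition, which the paper only asserts, is a sound addition, and you are right that the monotonicity of $q_\ell$ is not used.
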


Theorem~\ref{thm:margin_exit_instability} decomposes exit instability into a boundary-mass term and a quantization-perturbation tail term. The term $\Pr(|\Delta_\ell(x)|\leq\tau_\ell)$ captures samples near the confidence boundary, where small perturbations may flip the exit decision. The term $q_\ell(\tau_\ell,b_\ell)$ captures the probability that the quantization-induced perturbation exceeds the tolerated margin. As $b_\ell$ decreases, this tail term increases, yielding a larger upper bound on the exit-decision mismatch probability.

Since early exiting stops at the first confident layer, such layer-wise mismatches can shift the realized stopping-depth distribution, leading to premature exits that harm accuracy or delayed exits that reduce efficiency (see Corollary~\ref{cor:stopping_depth_perturbation} in Appendix~\ref{proof:stopping_depth_perturbation}).

\vparagraph{Random-depth Amplifies Quantization Error.}
We next show that early exiting turns quantization from a fixed-depth approximation problem into a random-depth distortion problem. Let $h_\ell(x)$ and $\tilde h_{\ell,\boldsymbol{b}}(x)$ denote the full-precision and quantized hidden states at layer $\ell$. Suppose that, for $\ell=1,\ldots,L$, the layer-wise distortion satisfies
\begingroup\small
\begin{equation*}
\|\tilde h_{\ell,\boldsymbol{b}}(x)-h_\ell(x)\|
\leq
\gamma_{\ell-1}
\|\tilde h_{\ell-1,\boldsymbol{b}}(x)-h_{\ell-1}(x)\|
+
\varepsilon_\ell(b_\ell),
\end{equation*}
\endgroup
where $\varepsilon_\ell(b_\ell)$ denotes the local distortion introduced at layer $\ell$, $\gamma_{\ell-1}$ captures sensitivity from layer $\ell-1$ to layer $\ell$, and the input distortion is initialized as $\|\tilde h_{0,\boldsymbol{b}}(x)-h_0(x)\|=0$.

\begin{theorem}
\label{thm:random_depth_error}
For any realized stopping depth $k$, the accumulated representation error satisfies
\begingroup\small
\begin{equation*}
\|\tilde h_{k,\boldsymbol{b}}(x)-h_k(x)\|
\leq
\sum_{\ell=1}^{k}
\varepsilon_\ell(b_\ell)
\prod_{r=\ell+1}^{k}\gamma_{r-1},
\end{equation*}
\endgroup
where the empty product is defined as $1$. Under early exiting with random stopping depth $K_{\boldsymbol{\phi}}(x)$, the expected distortion satisfies
\begingroup\small
\begin{equation*}
\mathbb{E}_{x\sim\mathcal{D}}
\left[
\|\tilde h_{K_{\boldsymbol{\phi}}(x),\boldsymbol{b}}(x)
-
h_{K_{\boldsymbol{\phi}}(x)}(x)\|
\right]
\leq
\mathbb{E}_{x\sim\mathcal{D}}
\left[
\sum_{\ell=1}^{K_{\boldsymbol{\phi}}(x)}
\varepsilon_\ell(b_\ell)
\prod_{r=\ell+1}^{K_{\boldsymbol{\phi}}(x)}
\gamma_{r-1}
\right].
\end{equation*}
\endgroup

\end{theorem}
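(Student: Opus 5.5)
The first claim follows by unrolling the assumed one-step distortion recursion, and the second by evaluating that deterministic bound at the random stopping depth and taking expectations. Write $D_\ell(x) := \|\tilde h_{\ell,\boldsymbol{b}}(x)-h_\ell(x)\|$, so the hypothesis reads $D_\ell \le \gamma_{\ell-1} D_{\ell-1} + \varepsilon_\ell(b_\ell)$ with $D_0 = 0$. We assume the natural conditions $\gamma_{r}\ge 0$ and $\varepsilon_\ell(b_\ell)\ge 0$, as for Lipschitz-type sensitivities and distortion magnitudes.

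\emph{Deterministic bound.} We prove, by induction on $k\ge 1$, that
\begin{equation*}
D_k(x) \le S_k(x) := \sum_{\ell=1}^{k}\varepsilon_\ell(b_\ell)\prod_{r=\ell+1}^{k}\gamma_{r-1}.
\end{equation*}
For the base case $k=1$, the recursion and $D_0=0$ give $D_1\le \varepsilon_1(b_1)$, which equals $S_1$ since the product is empty.

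For the inductive step, assume $D_{k-1}\le S_{k-1}$. Because $\gamma_{k-1}\ge 0$, multiplying preserves the inequality, so
\begin{equation*}
D_k \le \gamma_{k-1}S_{k-1} + \varepsilon_k(b_k).
\end{equation*}
Distributing $\gamma_{k-1}$ inside $S_{k-1}$ extends each product $\prod_{r=\ell+1}^{k-1}\gamma_{r-1}$ to $\prod_{r=\ell+1}^{k}\gamma_{r-1}$. The extra term $\varepsilon_k(b_k)$ is exactly the $\ell=k$ summand with empty product. Hence $D_k\le S_k$.

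This argument uses no property of how $k$ arises, so the bound holds pointwise in $x$ for every $k\in\{1,\dots,L\}$.

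\emph{Random stopping depth.} For each fixed $x$, the stopping depth $K_{\boldsymbol{\phi}}(x)$ is a specific integer in $\{1,\dots,L\}$. Substituting $k=K_{\boldsymbol{\phi}}(x)$ into the deterministic bound gives the pointwise inequality
\begin{equation*}
D_{K_{\boldsymbol{\phi}}(x)}(x)\le S_{K_{\boldsymbol{\phi}}(x)}(x).
\end{equation*}
Taking $\mathbb{E}_{x\sim\mathcal{D}}$ of both sides and using monotonicity of expectation yields the second claim.

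The only technical point is measurability. Both sides are finite sums over $k$ of $\mathbf{1}\{K_{\boldsymbol{\phi}}(x)=k\}$ times measurable functions of $x$, so they are measurable. They are nonnegative and bounded by $\max_k S_k$, so the expectations are well defined.

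\emph{Main obstacle.} The step needing the most care is the substitution of the random depth, because $K_{\boldsymbol{\phi}}(x)$ is determined by the quantized confidences and is therefore correlated with the errors $D_\ell$. The argument avoids any independence assumption because the deterministic bound holds simultaneously for all $k$ for each fixed $x$. Consequently, plugging in any data-dependent $k$, including the realized stopping depth, remains valid with no stopping-time machinery. Apart from this, the only point to state explicitly is the nonnegativity of the $\gamma$'s, which the induction step uses when multiplying the inductive hypothesis by $\gamma_{k-1}$.
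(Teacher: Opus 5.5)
Your proposal is correct and matches the paper's proof: the paper also unrolls the recursion from $E_0=0$ to get the bound for every fixed $k$, then substitutes $k=K_{\boldsymbol{\phi}}(x)$ pointwise and takes expectations; your induction is just an explicit version of that unrolling. Your assumption $\gamma_r\ge 0$ is genuinely needed and is left implicit in the paper; also, in the paper $K_{\boldsymbol{\phi}}(x)$ is the full-precision stopping depth rather than one set by quantized confidences, though your pointwise argument works for any data-dependent depth.
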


This theorem shows that quantization error under early exiting is path-dependent: its impact depends on local representation distortion, downstream propagation, and the probability that each layer is reached. Errors in frequently executed or shallow layers can have a larger amortized impact, while rarely executed layers may contribute less.

\vparagraph{Static Precision Allocation Is Mismatched.}
Conventional FPQ and MPQ allocate bit-widths according to static layer sensitivity, implicitly assuming that every layer is executed for every input. Under early exiting, however, the contribution of layer $\ell$ is weighted by its utilization probability
\begingroup\small
\begin{equation*}
u_\ell(\boldsymbol{\phi})
=
\Pr_{x\sim\mathcal{D}}\!\left(K_{\boldsymbol{\phi}}(x)\geq \ell\right).
\end{equation*}
\endgroup
Let $S_\ell$ denote the static quantization sensitivity of layer $\ell$. Under APQ, the relevant marginal importance becomes the utilization-weighted quantity $u_\ell(\boldsymbol{\phi})S_\ell$, rather than $S_\ell$ alone.

\begin{theorem}
\label{thm:mpq_mismatch}
Assume that static MPQ ranks layers by their static sensitivities $S_\ell$, while APQ ranks layers by their utilization-weighted sensitivities $u_\ell(\boldsymbol{\phi})S_\ell$. If there exist two layers $i$ and $j$ such that
\begingroup\small\begin{equation*}
S_i>S_j
\quad\text{but}\quad
u_i(\boldsymbol{\phi})S_i<u_j(\boldsymbol{\phi})S_j,
\end{equation*}\endgroup
then the static MPQ ranking is inconsistent with the APQ ranking. Moreover, under equal marginal bit-cost increments and marginal APQ benefit proportional to $u_\ell(\boldsymbol{\phi})S_\ell$, the static MPQ allocation can be suboptimal for the APQ objective.
\end{theorem}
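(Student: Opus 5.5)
The argument has two parts. The first claim, ranking inconsistency, follows directly from the definitions. The second claim, suboptimality, needs a small exchange argument built on an explicit instance.

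\textbf{Step 1: ranking inconsistency.} A ranking here is the order induced on layers by a score. Static MPQ uses the score $S_\ell$ and APQ uses $u_\ell(\boldsymbol{\phi})S_\ell$. The hypothesis $S_i>S_j$ places $i$ strictly above $j$ under MPQ. The hypothesis $u_i(\boldsymbol{\phi})S_i<u_j(\boldsymbol{\phi})S_j$ places $j$ strictly above $i$ under APQ. So the two orders disagree on the pair $(i,j)$, and neither order is a refinement of the other. We also note when the hypothesis can hold. Since $u_\ell(\boldsymbol{\phi})=\Pr(K_{\boldsymbol{\phi}}(x)\ge \ell)$ is non-increasing in $\ell$ and lies in $[0,1]$, it requires $u_i(\boldsymbol{\phi})/u_j(\boldsymbol{\phi})<S_j/S_i<1$. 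Thus $u_i<u_j$, which forces $i>j$ when utilizations differ: a deeper, more sensitive layer that is reached rarely.

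\textbf{Step 2: the marginal model.} The local APQ objective around an allocation is the task-loss term plus $\lambda C_{\text{APQ}}$ in \eqref{eqn:objective}. Under the stated assumptions, granting one additional bit increment to layer $\ell$ has the following effects:
\begin{itemize}
\item it reduces the risk or loss term by $\alpha\, u_\ell(\boldsymbol{\phi})S_\ell$ for some constant $\alpha>0$;
\item it incurs the same marginal cost increment $\delta c$ for every layer.
\end{itemize}
We model the budget-constrained allocation problem as follows: starting from a common baseline allocation, distribute one extra increment, or more generally $m$ increments. With equal costs, the optimal choice is a knapsack with unit weights. It is therefore solved greedily by picking the layers with the largest benefit $u_\ell(\boldsymbol{\phi})S_\ell$, which is exactly the APQ ranking.

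\textbf{Step 3: exchange argument for suboptimality.} Take a budget of a single increment, with $i,j$ as in the hypothesis. We may restrict attention to these two candidates, or take them to be the top two under both orders. Static MPQ gives the increment to $i$ because $S_i>S_j$. The alternative allocation gives it to $j$ instead. Both allocations have the same cost. The objective difference is
\[
\alpha\bigl(u_j(\boldsymbol{\phi})S_j-u_i(\boldsymbol{\phi})S_i\bigr)>0
\]
in favour of the alternative. Hence the MPQ allocation is strictly dominated, and therefore suboptimal for the APQ objective.

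To state "can be suboptimal" cleanly, I would exhibit the two-layer instance with baseline bits equal on both layers. I would then note that for general budgets, any MPQ allocation that funds $i$ but not $j$ is improved by swapping the increment from $i$ to $j$.

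\textbf{Main obstacle.} The main difficulty is making precise what "marginal APQ benefit proportional to $u_\ell S_\ell$" means. The benefit needs to be additively separable across layers, at least to first order, for the exchange to be valid. I would state this as a first-order or linearized assumption, with $\boldsymbol{\phi}$ held fixed. This keeps $u_\ell(\boldsymbol{\phi})$ from shifting with $\boldsymbol{b}$, since Theorem~\ref{thm:margin_exit_instability} shows that bit changes can perturb exits. The claim then holds within that local regime, and the word "can" in the statement is supported by the explicit instance.
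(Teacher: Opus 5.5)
Your proposal is correct and follows the paper's own proof. The first claim is read off directly from the two opposite orderings of the pair $(i,j)$. The second claim uses the same single-increment exchange: at equal budget cost, giving the extra precision to $j$ instead of $i$ gives a benefit proportional to $u_j(\boldsymbol{\phi})S_j>u_i(\boldsymbol{\phi})S_i$. Your additions go beyond the paper's short argument but are sound: you note that the hypothesis forces $u_i<u_j$, hence $i>j$, and you state the separability and fixed-$\boldsymbol{\phi}$ assumptions explicitly.
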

Theorem~\ref{thm:mpq_mismatch} shows that static quantization can be suboptimal even with accurate full-depth sensitivity estimates, because it ignores layer utilization under early exiting. In contrast, APQ prioritizes layers that are both sensitive and frequently exposed to quantization noise.

\begin{takeaway} 
\textbf{Takeaway.}
The above results yield two main observations. First, Theorem~\ref{thm:margin_exit_instability} shows that quantization can flip exit decisions near the boundary, motivating the decision-level boundary sensitivity risk (BSR). Second, Theorem~\ref{thm:random_depth_error} shows that quantization error is path-dependent under early exiting, motivating the representation-level quantization-induced drift (QID) and utilization-aware bit-width allocation. Together, these results explain the objective mismatch in Theorem~\ref{thm:mpq_mismatch}: static MPQ ignores utilization, whereas APQ calls for a framework that jointly optimizes bit-width allocation and exit-threshold adaptation.
\end{takeaway}

\section{\method}
\label{sec:method}
To optimize APQ effectively, we need to balance dynamic execution depth and layer-wise precision under quantization. Exit thresholds determine which layers are executed, while bit-width allocation controls how quantization error affects exit decisions and downstream representations. This coupling naturally gives rise to \textbf{\methodfull (\method)}, a bi-level optimization framework that jointly optimizes exit thresholds and layer-wise bit-width allocation. Detailed pseudocode is given in Appendix~\ref{apx:code}.

\subsection{Bi-Level Optimization with APQ}
\label{sub:question}

Since APQ aims to reduce expected amortized complexity while preserving prediction accuracy, the problem is naturally formulated as a \textbf{bi-level optimization} over $(\boldsymbol{b}, \boldsymbol{\phi})$.
Recall that $u_\ell(\boldsymbol{\phi})$ denotes the probability that inference reaches layer $\ell$.
We denote the expected task loss under APQ as follows.
\begingroup\small
\begin{equation}
\mathcal{L}(\boldsymbol{b}, \boldsymbol{\phi})
:= \mathbb{E}_{(x,y)\sim\mathcal{D}}
\Big[ \mathcal{L}\big(f_{\theta,\boldsymbol{b},\boldsymbol{\phi}}(x), y\big) \Big].
\end{equation}
\endgroup

\vparagraph{Outer-loop (Exit-threshold optimization).}
Let $\boldsymbol{\phi}=\{\phi_{\ell}\}_{\ell=1}^{L}$ denote the confidence thresholds of the exit heads.
The outer problem optimizes these thresholds to balance prediction accuracy and inference latency:
\begingroup\small
\begin{equation}
\begin{aligned}
\boldsymbol{\phi}^*
= \arg\min_{\boldsymbol{\phi}} \;&
\mathcal{L}\big(\boldsymbol{b}^*(\boldsymbol{\phi}), \boldsymbol{\phi}\big)
+ \lambda_{\mathrm{outer}} \,
C_{\text{APQ}}(\boldsymbol{b}^*(\boldsymbol{\phi}), \boldsymbol{\phi}),
\end{aligned}
\label{eq:outer}
\end{equation}
\endgroup
where $\lambda_{\mathrm{outer}}$ controls the trade-off between accuracy and latency, and
$\boldsymbol{b}^*(\boldsymbol{\phi})$ denotes the optimal bit-width allocation induced by the exit policy $\boldsymbol{\phi}$ through the inner loop.

\vparagraph{Inner-loop (Bit-width allocation).}
For fixed exit thresholds $\boldsymbol{\phi}$, the bit-width allocation
$\boldsymbol{b}=\{b_\ell\}_{\ell=1}^L$ is optimized to minimize amortized complexity while preserving accuracy:
\begingroup\small
\begin{equation}
\boldsymbol{b}^*(\boldsymbol{\phi})
= \arg\min_{\boldsymbol{b}} 
\mathcal{L}(\boldsymbol{b}, \boldsymbol{\phi})
+ \lambda_{\mathrm{inner}} \,
C_{\text{APQ}}(\boldsymbol{b}, \boldsymbol{\phi}),
\label{eq:inner}
\end{equation}
\endgroup
where $\lambda_{\mathrm{inner}}$ controls the strength of compression.

This bi-level formulation explicitly couples quantization with early exiting. The exit policy $\boldsymbol{\phi}$ determines layer utilization $u_\ell(\boldsymbol{\phi})$, while the bit allocation $\boldsymbol{b}$ affects confidence estimates and may alter realized exits. This coupling reflects the objective mismatch in Theorem~\ref{thm:mpq_mismatch}: static FPQ and MPQ ignore stochastic utilization and cannot fully capture the APQ objective. By jointly optimizing $\boldsymbol{\phi}$ and $\boldsymbol{b}$, APQ balances accuracy and efficiency under dynamic inference.

Since directly solving this bi-level problem is difficult due to discrete exit decisions and quantization-induced representation shifts, we introduce risk measures to guide exit-threshold adaptation and bit-width reallocation.\footnote{The NP-hardness of APQ is proved in Theorem~\ref{thm:np}, with details in Appendix~\ref{proof:np}.} We then present the iterative optimization procedure with lightweight self-distillation for accuracy recovery.
\subsection{Modeling Risk for Early Exiting and Quantization}
\label{sec:risk}

As shown in Section~\ref{sec:theory}, APQ instability arises from both decision uncertainty and representation distortion. We therefore introduce four complementary risk scores to capture these effects.

\vparagraph{Boundary Sensitivity Risk (BSR).}
To characterize the instability of exit decisions under quantization noise, we introduce Boundary Sensitivity Risk:
\begingroup\small
\begin{equation}
\label{eq:bsr}
\mathrm{BSR}_\ell
=
\mathbb{E}_{x\sim \mathcal{D}}
\Big[
\mathbf{1}\big\{|\rho_\ell(x)-\phi_\ell|\le \tau_\ell \big\}
\Big],
\end{equation}
\endgroup
where $\tau_\ell$ is a tolerance margin estimated from the unquantized model. BSR measures the fraction of samples near the exit boundary, corresponding to the boundary-mass term in Theorem~\ref{thm:margin_exit_instability}. A high BSR indicates that small quantization perturbations can easily flip exit outcomes.

\vparagraph{Quantization-Induced Drift (QID).}
Let $h_\ell(x)$ and $\tilde h_\ell(x)$ denote the full-precision and quantized activations at layer $\ell$, respectively. To capture distributional distortion beyond pointwise quantization noise, we introduce Quantization-Induced Drift:
\begingroup\small
\begin{equation}
\label{eq:qid}
\mathrm{QID}_\ell
=
\mathbb{E}_{x\sim\mathcal{D}}
\left[
\left(
\frac{\max_i \tilde{h}_\ell(x)_i - \min_i \tilde{h}_\ell(x)_i}
     {\max_i h_\ell(x)_i - \min_i h_\ell(x)_i}
- 1
\right)^{2}
\right],
\end{equation}
\endgroup
which approximates the drift between quantized and full-precision activations via value-range deviation. Larger QID indicates stronger quantization-induced shifts in hidden representations, which can propagate along realized inference paths as characterized by Theorem~\ref{thm:random_depth_error}.

Beyond the two APQ-specific risks above, we further incorporate two standard scores from early exiting and quantization to capture task-level degradation and hidden-state distortion.

\vparagraph{Performance Gap Risk (PGR).}
We quantify early-exit degradation via Performance Gap Risk~\citep{jazbec2024fast}. For each exit head $\ell$, it is defined as
\begingroup\small
\begin{equation}
\mathrm{PGR}_\ell
=
\mathbb{E}_{(x,y)\sim\mathcal{D}}
\left[
\mathcal{L}(\hat{y}_\ell(x),y)
-
\mathcal{L}(\hat{y}_{L}(x),y)
\right],
\end{equation}
\endgroup
where $\mathcal{L}$ is the task-specific objective, $\hat{y}_\ell(x)$ is the prediction from the exit head at layer $\ell$, and $\hat{y}_{L}(x)$ is the full-depth prediction. PGR measures the excess loss incurred when exiting at layer $\ell$ instead of using the full-depth output.

\vparagraph{Inverse Signal-to-Quantization-Noise Ratio (SQNR$^{-1}$).}
As a supporting representation-level risk, we also use the inverse signal-to-quantization-noise ratio~\citep{tai2024mptq}, defined as
\begingroup\small
\begin{equation}
\mathrm{SQNR}^{-1}_\ell
=
\mathbb{E}_{x\sim\mathcal{D}}
\left[
\frac{\|h_\ell(x)-\tilde{h}_\ell(x)\|^2}{\|h_\ell(x)\|^2}
\right].
\end{equation}
\endgroup
It quantifies the relative power of quantization noise and complements QID by measuring pointwise representation distortion.

\vparagraph{Overall Risk Metric.}
We combine the above risks into a per-layer instability score:
\begingroup\small
\begin{equation}
\label{eq:risk}
R_\ell
=
\alpha \big(\mathrm{PGR}_\ell + \mathrm{BSR}_\ell \big)
+
(1-\alpha)\big(\mathrm{QID}_\ell + \mathrm{SQNR}^{-1}_\ell \big),
\end{equation}
\endgroup
where $\alpha$ balances decision-level uncertainty and representation-level distortion. All terms are normalized before aggregation. The first group measures decision-level risk: PGR captures the loss caused by early exiting, while BSR captures the sensitivity of exit decisions to quantization perturbations. The second group measures representation-level risk: QID captures distributional drift in hidden representations, while SQNR$^{-1}$ captures pointwise quantization noise.
\subsection{Iterative Optimization for APQ}
\label{sec:iterative}

Building on the proposed risk formulation, we develop a unified iterative optimization framework for APQ. Each iteration alternates among three stages: (i) optimizing the exit thresholds $\boldsymbol{\phi}$ under a fixed bit allocation $\boldsymbol{b}$; (ii) reallocating layer-wise bit-widths according to the APQ-normalized risk--cost ratios $\Psi_\ell$; and (iii) applying a lightweight recovery step to the model parameters $\theta$ to mitigate accuracy degradation caused by low-bit quantization and stochastic early exiting.

\vparagraph{Exit-Threshold Optimization (Outer-loop).}
Under a fixed bit allocation $\boldsymbol{b}$, the exit thresholds $\boldsymbol{\phi}$ are optimized by minimizing the accumulated instability risks together with the amortized execution cost:
\begingroup\small
\begin{equation}
\label{eq:outer-opt}
\boldsymbol{\phi}^{\star}
\in
\arg\min_{\boldsymbol{\phi}}
\sum_{\ell=1}^L
R_\ell
+
\lambda_{\mathrm{outer}}\,\mathrm{BOPs}(\boldsymbol{b};\boldsymbol{\phi}).
\end{equation}
\endgroup
This problem is solved via coordinate search. Confidence scores $\rho_\ell(x)$ are obtained using a single feedforward pass, after which candidate thresholds can be efficiently evaluated without retraining~\citep{jazbec2024fast}. All quantities are estimated on a held-out calibration set.

\vparagraph{Bit-width Allocation (Inner-loop).}
Given optimized early-exit thresholds $\boldsymbol{\phi}$, we allocate
per-layer bit-widths $\boldsymbol{b}$ under a total-bit budget $B$ by
ranking the marginal utility of assigning additional precision to each
layer. Specifically, we define the priority score
\begingroup\small
\begin{equation}
\label{eq:psi_ell_bits}
\Psi_\ell(\boldsymbol{b};\boldsymbol{\phi})
=
u_\ell(\boldsymbol{\phi})\,\Delta R_\ell(\boldsymbol{b};\boldsymbol{\phi})
-
\lambda_{\mathrm{inner}}
\frac{
\Delta\mathrm{BOPs}_\ell(\boldsymbol{b};\boldsymbol{\phi})
}{
\mathrm{BOPs}(\boldsymbol{b};\boldsymbol{\phi})
},
\end{equation}
\endgroup
where $u_\ell(\boldsymbol{\phi})=\mathbb{E}_{x\sim\mathcal{D}}
[u_\ell(x,\boldsymbol{\phi})]$ is the probability that layer~$\ell$ is
executed under the current exit policy. 
The terms $\Delta R_\ell(\boldsymbol{b};\boldsymbol{\phi})$ and
$\Delta\mathrm{BOPs}_\ell(\boldsymbol{b};\boldsymbol{\phi})$ denote,
respectively, the marginal reduction in normalized instability risk and
the corresponding increase in amortized computational cost:
\begingroup\small
\begin{equation}
\begin{aligned}
\Delta R_\ell(\boldsymbol{b};\boldsymbol{\phi})
&=
R_\ell(\boldsymbol{b};\boldsymbol{\phi})
-
R_\ell(\boldsymbol{b}^{+\ell};\boldsymbol{\phi}),
\qquad
\Delta\mathrm{BOPs}_\ell(\boldsymbol{b};\boldsymbol{\phi})
=
\mathrm{BOPs}(\boldsymbol{b}^{+\ell};\boldsymbol{\phi})
-
\mathrm{BOPs}(\boldsymbol{b};\boldsymbol{\phi}) .
\end{aligned}
\end{equation}
\endgroup
where $\boldsymbol{b}^{+\ell}$ denotes the allocation obtained by
increasing only $b_\ell$. MAQEE then reallocates bits from layers with smaller $\Psi_\ell$ to those with larger $\Psi_\ell$ under the budget $B$, prioritizing frequently executed and quantization-sensitive layers while discouraging costly BOPs increases.


\vparagraph{Self-distillation Recovery.}
We use self-distillation to recover performance under low precision and dynamic execution. For input $x$, the full-precision teacher outputs logits $z$, while the quantized early-exit student outputs logits $\tilde{z}$. We define the softened output probabilities with temperature $T$ as $p(x; f_\theta, T)=\mathrm{softmax}(z/T)$ and $p(x; f_{\theta,\boldsymbol{b},\boldsymbol{\phi}}, T)=\mathrm{softmax}(\tilde{z}/T)$.

The recovery objective jointly enforces task supervision and distributional alignment:
\begingroup\small
\begin{equation}
\label{eq:recovery_loss}
\mathcal{L}_{\mathrm{rec}}(\theta; x, y)
=
\mathcal{L}\!\big(f_{\theta,\boldsymbol{b},\boldsymbol{\phi}}(x),\, y\big)
+
T^2
\mathcal{L}_{\mathrm{KL}}\!\Big(
p(x; f_\theta, T)\,\Big\|\,p(x; f_{\theta,\boldsymbol{b},\boldsymbol{\phi}}, T)
\Big),
\end{equation}
\endgroup
where $\mathcal{L}$ denotes the task-specific objective and $T>1$ controls the softness of the probability distributions. By aligning the quantized early-exit student with its full-precision teacher, this step compensates for quantization-induced representation drift and stabilizes exit decisions.

\vparagraph{Warm-Started Iteration.} To accelerate iterative optimization, we adopt warm-start strategies for both loops. In the outer loop, exit-threshold optimization is initialized from the previous solution. Since successive bit allocations usually differ only slightly, we reuse previous thresholds and cached calibration confidences to evaluate candidates without extra forward passes. When bit-width changes are localized, we restrict the search to a small neighborhood around the previous thresholds. In the inner loop, we cache model and quantizer states after each update, warm-start each new allocation $\boldsymbol{b}^{(t)}$ from the cached checkpoint with the largest layer-wise overlap, re-quantize only layers whose $b_\ell$ changes, and apply a brief self-distillation recovery using Eq.~\eqref{eq:recovery_loss}. This reduces optimization cost while maintaining stable convergence.

\vparagraph{Monotonic Improvement.}
Finally, we show that MAQEE monotonically improves the bi-level objective and outperforms static FPQ and MPQ with early exiting.
\begin{theorem}\label{thm:maqee}
MAQEE alternates between threshold updates $\boldsymbol{\phi}$ and bit-allocation updates $\boldsymbol{b}$. If each coordinate update is solved exactly over a finite or discretized candidate set, the APQ objective in Eq.~\eqref{eqn:objective} is non-increasing and eventually reaches a coordinate-wise optimum. In contrast, static FPQ and MPQ ignore stochastic layer utilization and can therefore be suboptimal under early exiting.
\end{theorem}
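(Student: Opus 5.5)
The plan is the standard block-coordinate descent argument. Write $J(\boldsymbol{b},\boldsymbol{\phi}) := \mathcal{L}(\boldsymbol{b},\boldsymbol{\phi}) + \lambda C_{\text{APQ}}(\boldsymbol{b},\boldsymbol{\phi})$ for the APQ objective in Eq.~\eqref{eqn:objective}. The domain is $\mathcal{B}^L \times \Phi$, where $\Phi$ is the finite (discretized) threshold grid. It is finite, so $J$ takes finitely many values and is bounded below.

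Let $(\boldsymbol{b}^{(t)},\boldsymbol{\phi}^{(t)})$ be the iterates. The threshold step gives $\boldsymbol{\phi}^{(t+1)} \in \arg\min_{\boldsymbol{\phi}\in\Phi} J(\boldsymbol{b}^{(t)},\boldsymbol{\phi})$, and the bit step gives $\boldsymbol{b}^{(t+1)} \in \arg\min_{\boldsymbol{b}\in\mathcal{B}^L} J(\boldsymbol{b},\boldsymbol{\phi}^{(t+1)})$. The previous point is a feasible candidate for each exact update, so
\[
J(\boldsymbol{b}^{(t+1)},\boldsymbol{\phi}^{(t+1)}) \le J(\boldsymbol{b}^{(t)},\boldsymbol{\phi}^{(t+1)}) \le J(\boldsymbol{b}^{(t)},\boldsymbol{\phi}^{(t)}),
\]
which is the non-increasing claim. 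To keep this valid I will read ``solved exactly'' as minimizing $J$ itself with $\theta$ held fixed, so the risk-score surrogate and the recovery step are either absorbed into the exact coordinate minimization or accepted only when they do not increase $J$. I will state this interpretation explicitly.

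For convergence, I will adopt a tie-breaking rule: keep the incumbent whenever it is among the minimizers. Then every iteration that moves strictly decreases $J$. Since $J$ takes finitely many values, only finitely many strict decreases are possible, and the sequence is eventually constant at some point $(\bar{\boldsymbol{b}},\bar{\boldsymbol{\phi}})$. At a fixed point, $\bar{\boldsymbol{\phi}}$ minimizes $J(\bar{\boldsymbol{b}},\cdot)$ and $\bar{\boldsymbol{b}}$ minimizes $J(\cdot,\bar{\boldsymbol{\phi}})$, which is exactly a coordinate-wise optimum.

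The main obstacle I expect is reconciling the theorem with the actual procedure. The outer step minimizes a risk-plus-BOPs surrogate, the inner step is greedy $\Psi_\ell$ reallocation, and the recovery step changes $\theta$. None of these is literally exact minimization of Eq.~\eqref{eqn:objective}. My first attempt would be to wrap each step in an acceptance test that rejects any update raising $J$; this preserves monotonicity at no cost to the argument.

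For the contrast with FPQ and MPQ, I will argue that a static method produces an allocation $\boldsymbol{b}^{\mathrm{s}}$ independent of $\boldsymbol{\phi}$. For FPQ this allocation is uniform; for MPQ it is ranked by $S_\ell$. Initializing MAQEE at $(\boldsymbol{b}^{\mathrm{s}},\boldsymbol{\phi}^{\mathrm{s}})$, with $\boldsymbol{\phi}^{\mathrm{s}}$ the thresholds used by static-plus-EE, monotonicity gives a final objective no larger than $J(\boldsymbol{b}^{\mathrm{s}},\boldsymbol{\phi}^{\mathrm{s}})$. Strict suboptimality then follows from Theorem~\ref{thm:mpq_mismatch}. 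Whenever some pair $i,j$ satisfies $S_i>S_j$ but $u_i(\boldsymbol{\phi})S_i<u_j(\boldsymbol{\phi})S_j$, moving one bit increment from $i$ to $j$ keeps the cost fixed (equal marginal bit-cost increments) and strictly improves $J$. Hence $\boldsymbol{b}^{\mathrm{s}}$ is not a bit-coordinate minimizer, and the first exact inner update strictly decreases $J$.
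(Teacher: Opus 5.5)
The paper states Theorem~\ref{thm:maqee} without proof. Its appendix proves Theorems~\ref{thm:margin_exit_instability}--\ref{thm:mpq_mismatch}, Corollary~\ref{cor:stopping_depth_perturbation} and Theorem~\ref{thm:np}, but not this one. The argument the statement implicitly relies on is yours: exact block-coordinate descent on a finite domain, plus Theorem~\ref{thm:mpq_mismatch} for the contrast. Your monotonicity and finite-termination steps are correct.

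Your explicit reading of ``solved exactly'' is needed, not a formality:
\begin{itemize}
\item The paper's bi-level form in Eqs.~\eqref{eq:outer}--\eqref{eq:inner} uses two coefficients, $\lambda_{\mathrm{outer}}$ and $\lambda_{\mathrm{inner}}$. Alternating exact minimization of two different objectives need not be monotone for either, so both blocks must minimize the same $J$ with the same $\lambda$.
\item $\theta$ must be frozen. A self-distillation step, even one that passes your acceptance test, changes $J$ and makes its range infinite. Monotonicity survives, but your finite-value termination argument does not.
\item The previous iterate must remain a candidate at every step: thresholds initialized on the grid, and a fixed budget-feasible bit set.
\end{itemize}
Your tie-breaking rule is convenient but not essential. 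Without it the values still stabilize, and from then on the half-step point $(\boldsymbol{b}^{(t)},\boldsymbol{\phi}^{(t+1)})$ is already a coordinate-wise optimum.

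The step that needs tightening is the claim that moving one increment from layer $i$ to layer $j$ ``keeps the cost fixed.'' The cost in $J$ is $C_{\text{APQ}}$, so you need $u_i(\boldsymbol{\phi})\Delta c_i=u_j(\boldsymbol{\phi})\Delta c_j$, not equal raw BOPs increments.
\begin{itemize}
\item For ViT/DeiT, whose blocks have identical FLOPs, the natural reading is $\Delta c_i=\Delta c_j=\Delta c$.
\item The inversion $S_i>S_j$, $u_iS_i<u_jS_j$ (with $S_i,S_j>0$) then forces $u_j>u_i$. The swap therefore raises $\lambda C_{\text{APQ}}$ by $\lambda\Delta c\,(u_j-u_i)>0$.
\item So $J$ improves only if the loss gain $\kappa(u_jS_j-u_iS_i)$ exceeds that increase, where $\kappa>0$ is the proportionality constant.
\end{itemize}
Since the theorem only asserts that static allocation \emph{can} be suboptimal, it suffices to add this condition (e.g.\ $\lambda$ small enough), or to read the ``marginal APQ benefit'' of Theorem~\ref{thm:mpq_mismatch} as net of the cost term.

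Two smaller points. The swap is exactly Theorem~\ref{thm:mpq_mismatch}'s comparison of two increments from a common base ($\boldsymbol{b}^{\mathrm{s}}$ with one increment removed at $i$), so you need no separability of $\mathcal{L}$ across layers. Also, if the first threshold update already leaves $\boldsymbol{\phi}^{\mathrm{s}}$, the strict decrease under your tie-breaking rule comes from that step rather than the inner one. The conclusion $J<J(\boldsymbol{b}^{\mathrm{s}},\boldsymbol{\phi}^{\mathrm{s}})$ is unaffected.
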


\section{Experiments}
\label{sec:experiment}
In the following, we evaluate \method on three vision tasks and three backbones, comparing it with two early-exit methods and three quantization baselines.

\subsection{Experimental Setup}

\vparagraph{Datasets.}
We evaluate \method on image classification (CIFAR-100~\citep{krizhevsky2009learning}, ImageNet~\citep{deng2009imagenet}), semantic segmentation (SceneParse150~\citep{zhou2017scene}), and object detection (MS COCO~\citep{lin2014microsoft}), reporting classification accuracy, mean IoU, and mAP, respectively. Experiments are conducted on three backbones: DeiT~\citep{touvron2021training}, ViT~\citep{dosovitskiy2020image}, and Swin~\citep{liu2021swin}. For classification and segmentation, we use standard cross-entropy loss; for object detection, we use the bipartite matching loss~\citep{carion2020end}.

\vparagraph{Baselines.}
We compare \method against representative early-exiting baselines, including ViT-EE~\citep{bakhtiarnia2021multi} and LGViT~\citep{xu2023lgvit}, as well as state-of-the-art quantization methods for ViTs. Specifically, we include fixed-precision quantization (FPQ) methods: RepQ~\citep{li2023repq} and ERQ~\citep{zhong2025towards}, and a mixed-precision quantization (MPQ) method: MPTQ~\citep{tai2024mptq}. All quantization experiments are conducted under the W4A4 setting, with 4-bit weights and activations. Here, FP4/4 applies a uniform 4-bit quantization scheme to all layers, whereas MP4/4 permits layer-wise variation in weight bit-widths while preserving an average precision of 4 bits; all activations are quantized to 4 bits.

\vparagraph{Evaluation Protocol.}
For a fair comparison~\citep{shang2024quantized}, we evaluate the accuracy--efficiency trade-off under two complementary settings.
\emph{(i) Controlled Performance.} We measure the exit layer $\hat{L}$ and bit operations $\hat{\mathrm{BOPs}}$ required to reach a predefined target performance, set to the quantization-only baseline without early exiting: 87\% for CIFAR-100, 79\% for ImageNet, 30\% for SceneParse150, and 55\% for MS COCO. Models that fail to meet the target are reported with $\hat{L}$ and $\hat{\mathrm{BOPs}}$ as N/A.
\emph{(ii) Standard Configuration.} Baseline models are evaluated using their default thresholds and quantization settings. For both settings, we report the average exit layer $\bar{L}$, the average bit operations $\bar{\mathrm{BOPs}}$, and the task performance. All results are averaged over 10 runs, with standard deviations less than 3\%. Additional details and hyperparameter settings are provided in Appendix~\ref{apx:implement}.

\subsection{Quantitative Results}

\begin{table}[t]
\centering\small
\caption{Performance on Image Classification.}

\resizebox{\columnwidth}{!}{%
\begin{tabular}{@{}cccccccccccccc@{}}
\toprule
\multirow{3}{*}{} & \multirow{3}{*}{\begin{tabular}[c]{@{}c@{}}Bits\\ (W/A)\end{tabular}} & \multirow{3}{*}{Quant} & \multirow{3}{*}{EE} & \multicolumn{5}{c}{CIFAR-100} & \multicolumn{5}{c}{ImageNet} \\ \cmidrule(l){5-14} 
 &  &  &  & \multicolumn{2}{c}{Controlled Perf.} & \multicolumn{3}{c}{Std. Config} & \multicolumn{2}{c}{Controlled Perf.} & \multicolumn{3}{c}{Std. Config} \\ \cmidrule(l){5-14} 
 &  &  &  & $\hat{L}\downarrow$ & $\hat{\text{BOPs}}\downarrow$ & $\bar{L}\downarrow$ & $\bar{\text{BOPs}} \downarrow$ & Acc.$\uparrow$ & $\hat{L}\downarrow$ & $\hat{\text{BOPs}}\downarrow$ & $\bar{L}\downarrow$ & $\bar{\text{BOPs}} \downarrow$ & Acc.$\uparrow$ \\ \midrule
\multirow{10}{*}{\rotatebox[origin=c]{90}{Swin}} & \multirow{3}{*}{FP32} & \multirow{3}{*}{-} & \textit{-} & \textit{/} & \textit{/} & \textit{24} & \textit{1.39E13} & \textit{94.16} & \textit{/} & \textit{/} & \textit{24} & \textit{1.39E13} & \textit{83.50} \\ \cmidrule(l){4-14} 
 &  &  & ViT-EE & 13.59 & 9.30E12 & 16.47 & 1.09E13 & 90.80 & 16.62 & 1.23E13 & 18.23 & 1.35E13 & 83.68 \\
 &  &  & LGViT & 9.88 & 1.02E13 & 13.80 & 9.41E12 & 89.38 & 11.13 & 8.83E12 & 14.20 & 1.10E13 & 82.70 \\ \cmidrule(l){2-14} 
 & \multirow{4}{*}{FP4/4} & \multirow{2}{*}{RepQ} & ViT-EE & N/A & N/A & 18.72 & 2.12E11 & 82.23 & N/A & N/A & 20.16 & 2.27E11 & 74.75 \\
 &  &  & LGViT & N/A & N/A & 13.69 & 1.65E11 & 74.38 & N/A & N/A & 18.97 & 2.19E11 & 69.85 \\ \cmidrule(l){3-14} 
 &  & \multirow{2}{*}{ERQ} & ViT-EE & 23.72 & 2.69E11 & 21.26 & 2.29E11 & 83.58 & 23.46 & 2.65E11 & 20.96 & 2.31E11 & 75.45 \\
 &  &  & LGViT & N/A & N/A & 19.76 & 1.96E11 & 76.74 & N/A & N/A & 20.80 & 2.38E11 & 73.97 \\ \cmidrule(l){2-14} 
 & \multirow{3}{*}{MP4/4} & \multirow{2}{*}{MPTQ} & ViT-EE & N/A & N/A & 21.01 & 2.36E11 & 81.62 & N/A & N/A & 20.99 & 2.39E11 & 71.55 \\
 &  &  & LGViT & N/A & N/A & 16.56 & 1.90E11 & 75.07 & N/A & N/A & 21.23 & 2.42E11 & 74.35 \\ \cmidrule(l){3-14} 
 &  & \multicolumn{2}{c}{\textbf{MAQEE}} & \textbf{12.58} & \textbf{1.53E11} & \textbf{13.79} & \textbf{1.65E11} & \textbf{89.13} & \textbf{17.60} & \textbf{2.09E11} & \textbf{18.75} & \textbf{2.12E11} & \textbf{80.78} \\ \midrule
\multirow{10}{*}{\rotatebox[origin=c]{90}{ViT-B}} & \multirow{3}{*}{FP32} & \multirow{3}{*}{-} & \textit{-} & \textit{/} & \textit{/} & \textit{12} & \textit{1.80E13} & \textit{90.47} & \textit{/} & \textit{/} & \textit{12} & \textit{1.80E13} & \textit{83.10} \\ \cmidrule(l){4-14} 
 &  &  & ViT-EE & 7.09 & 1.81E13 & 7.27 & 1.89E13 & 87.51 & 7.65 & 1.90E13 & 8.68 & 2.05E13 & 79.60 \\
 &  &  & LGViT & 5.90 & 1.68E13 & 6.33 & 1.76E13 & 88.51 & 6.69 & 1.80E13 & 7.08 & 1.86E13 & 80.30 \\ \cmidrule(l){2-14} 
 & \multirow{4}{*}{FP4/4} & \multirow{2}{*}{RepQ} & ViT-EE & 11.78 & 3.94E11 & 9.90 & 3.51E11 & 85.71 & N/A & N/A & 10.83 & 3.71E11 & 74.67 \\
 &  &  & LGViT & N/A & N/A & 8.08 & 3.13E11 & 73.52 & N/A & N/A & 8.74 & 3.31E11 & 72.30 \\ \cmidrule(l){3-14} 
 &  & \multirow{2}{*}{ERQ} & ViT-EE & 8.77 & 3.23E11 & 9.28 & 3.35E11 & 87.82 & N/A & N/A & 9.88 & 3.49E11 & 77.35 \\
 &  &  & LGViT & N/A & N/A & 9.00 & 3.34E11 & 77.64 & N/A & N/A & 9.12 & 3.40E11 & 73.06 \\ \cmidrule(l){2-14} 
 & \multirow{3}{*}{MP4/4} & \multirow{2}{*}{MPTQ} & ViT-EE & 8.93 & 3.27E11 & 8.71 & 3.22E11 & 85.41 & N/A & N/A & 10.26 & 3.58E11 & 76.56 \\
 &  &  & LGViT & 10.80 & 3.74E11 & 8.05 & 3.14E11 & 84.44 & 10.25 & 3.68E11 & 9.33 & 3.45E11 & 77.53 \\ \cmidrule(l){3-14} 
 &  & \multicolumn{2}{c}{\textbf{MAQEE}} & \textbf{5.86} & \textbf{2.61E11} & \textbf{5.98} & \textbf{2.64E11} & \textbf{87.86} & \textbf{7.04} & \textbf{2.85E11} & \textbf{7.26} & \textbf{2.93E11} & \textbf{79.41} \\ \bottomrule
\end{tabular}%
}
\vspace{-2mm}
\label{table:image}
\end{table}

\begin{table*}[t]
\centering\small
\caption{Performance on Semantic Segmentation and Object Detection.}
\vspace{-2mm}
\resizebox{\textwidth}{!}{%
\begin{tabular}{@{}cccccccccccc@{}}
\toprule
\multirow{3}{*}{} & \multirow{3}{*}{Method} & \multicolumn{5}{c}{Segmentation (SceneParse150)} & \multicolumn{5}{c}{Detection (MS COCO)} \\ \cmidrule(l){3-12} 
 &  & \multicolumn{2}{c}{Controlled Perf.} & \multicolumn{3}{c}{Std. Config} & \multicolumn{2}{c}{Controlled Perf.} & \multicolumn{3}{c}{Std. Config} \\ \cmidrule(l){3-12} 
 &  & $\hat{L}\!\downarrow$ & $\hat{\text{BOPs}}\!\downarrow$ & $\bar{L}\!\downarrow$ & $\bar{\text{BOPs}}\!\downarrow$ & IoU$\uparrow$ & $\hat{L}\!\downarrow$ & $\hat{\text{BOPs}}\!\downarrow$ & $\bar{L}\!\downarrow$ & $\bar{\text{BOPs}}\!\downarrow$ & mAP@0.5$\uparrow$ \\ \midrule
\multirow{4}{*}{\rotatebox[origin=c]{90}{ViT-B}} & \textit{FP32} & / & / & 12 & 1.89E13 & 34.14 & / & / & 12 & 1.95E13 & 68.23 \\ \cmidrule(l){2-12} 
 & ViT-EE+ERQ & N/A & N/A & 8.10 & 3.17E11 & 27.34 & N/A & N/A & 8.72 & 3.34E11 & 40.42 \\
 & ViT-EE+MPQ & 11.46 & 3.94E11 & 8.08 & 3.16E11 & 26.52 & N/A & N/A & 8.59 & 3.30E11 & 41.04 \\
 & \textbf{MAQEE} & \textbf{5.74} & \textbf{2.60E11} &\textbf{6.79} & \textbf{2.86E11} & \textbf{31.20} & \textbf{10.40} & \textbf{3.77E11} & 8.68 & 3.32E11 & \textbf{50.83} \\ \bottomrule
\end{tabular}%
}
\label{table:task}
\end{table*}

\newsavebox{\ablationmeasurebox}
\newsavebox{\resultfiguremeasurebox}
\newlength{\expfloatheight}
\newlength{\ablationmeasureheight}
\newlength{\resultfiguremeasureheight}

\savebox{\ablationmeasurebox}{%
    \parbox[t]{0.42\textwidth}{%
        \centering\small
\resizebox{\linewidth}{!}{%
\begin{tabular}{@{}cccccc@{}}
\toprule
\multirow{2}{*}{Method} & \multicolumn{2}{c}{Controlled Perf.} & \multicolumn{3}{c}{Std. Config} \\ \cmidrule(l){2-6} 
 & $\hat{L}\!\downarrow$ & $\hat{\text{BOPs}}\!\downarrow$ & $\bar{L}\!\downarrow$ & $\bar{\text{BOPs}}\!\downarrow$ & Acc.$\uparrow$ \\ \midrule
\textbf{MAQEE} & \textbf{7.04} & \textbf{2.85E11} & 7.26 & 2.93E11 & \textbf{79.41} \\
w/o PGR & N/A & N/A & \textbf{4.23} & \textbf{2.12E11} & 61.18 \\
w/o BSR & 7.59 & 3.12E11 & 7.01 & 2.74E11 & 76.53 \\
w/o $\text{SQNR}^{-1}$ & 7.35 & 2.98E11 & 7.28 & 2.95E11 & 78.35 \\
w/o QID & 7.26 & 3.01E11 & 7.18 & 2.95E11 & 79.06 \\ \bottomrule
\end{tabular}%
}\par
        \captionof*{table}{Ablation Study of MAQEE.}%
    }%
}
\savebox{\resultfiguremeasurebox}{%
    \parbox[t]{0.54\textwidth}{%
        \centering
        \includegraphics[width=\linewidth]{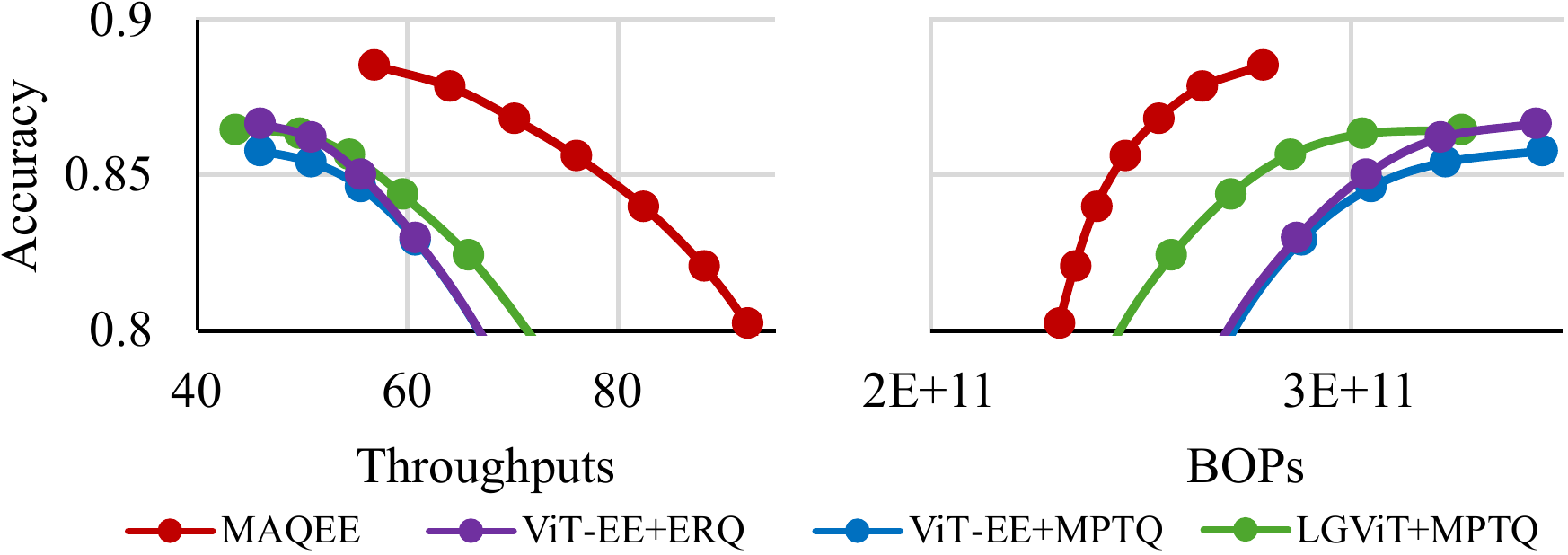}\par
        \captionof*{figure}{Accuracy--throughput/BOPs results.}%
    }%
}
\setlength{\ablationmeasureheight}{\dimexpr\ht\ablationmeasurebox+\dp\ablationmeasurebox\relax}
\setlength{\resultfiguremeasureheight}{\dimexpr\ht\resultfiguremeasurebox+\dp\resultfiguremeasurebox\relax}
\setlength{\expfloatheight}{\ablationmeasureheight}
\ifdim\resultfiguremeasureheight>\expfloatheight
    \setlength{\expfloatheight}{\resultfiguremeasureheight}
\fi

\begin{figure}[t]
    \centering
    \begin{minipage}[t][\expfloatheight][t]{0.49\textwidth}
        \vspace{0pt}
        \par
        \vfill
        \captionof{table}{Ablation Study of MAQEE.}
        \label{tab:ablation}
    \end{minipage}\hfill
    \begin{minipage}[t][\expfloatheight][t]{0.50\textwidth}
        \vspace{0pt}
        \centering
        \includegraphics[width=\linewidth]{asserts/figure2.pdf}\par
        \vfill
        \captionof{figure}{Accuracy--throughput/BOPs results. }
        \label{fig:acc_v}
    \end{minipage}
\end{figure}

\vparagraph{Accuracy and Efficiency.}
As shown in Table~\ref{table:image}, \method achieves the best accuracy--efficiency trade-off on ImageNet under both Controlled Performance and Standard Configuration settings. Across all backbones, \method reduces BOPs by over 95\% while maintaining accuracy close to the full-precision model, with consistent trends on ViT-L and DeiT reported in Appendix~\ref{apx:deit}. In contrast, fixed-precision baselines such as RepQ and ERQ are less stable. RepQ can suffer severe accuracy degradation because uniform quantization amplifies representation-level errors caused by layer sensitivity and quantization noise (Theorem~\ref{thm:random_depth_error}). Although MPTQ uses mixed precision, it assumes a static inference path and ignores input-dependent layer utility under early exiting, which can lead to unnecessarily deep exits or suppressed early-exit behavior (Theorem~\ref{thm:margin_exit_instability}).

\vparagraph{Pareto Frontier.}
Figure~\ref{fig:acc_v} shows that \method forms the Pareto frontier, achieving higher accuracy at comparable cost or lower cost at comparable accuracy under both throughput and BOPs comparisons. This advantage comes from jointly optimizing bit-width allocation and exit thresholds under the APQ objective, rather than treating quantization and early exiting separately. By coupling decision-level risks (PGR and BSR) with representation-level risks (SQNR$^{-1}$ and QID), \method stabilizes exit behavior and reduces exit depth and BOPs by up to 50\% relative to strong baselines, consistent with the utilization-aware principle in Theorem~\ref{thm:mpq_mismatch}.

\vparagraph{Segmentation and Detection.}
To assess cross-task generalization, Table~\ref{table:task} evaluates \method on semantic segmentation and object detection using a ViT-B backbone with ViT-EE as the exiting head. Overall, \method improves efficiency by at least 15\% under standard settings while delivering up to $2\times$ speedup under the target-performance setting. In object detection, however, early exiting with quantization still causes a noticeable accuracy drop, as detection requires multi-scale features that can be truncated by early exits and distorted by quantization.

\vparagraph{Ablation Studies.}
Table~\ref{tab:ablation} shows that the full model achieves the best controlled-performance trade-off, with stable early exits and reduced cost. Removing PGR weakens outer-loop control over premature-exit accuracy gaps, leading to overly aggressive thresholds that fail to meet the target accuracy. Removing BSR impairs exit stability, as quantization perturbations more easily flip exit decisions; satisfying the same accuracy constraint therefore requires more conservative execution, increasing effective depth and BOPs. Without QID, bit-width allocation becomes less responsive to quantization-induced distribution drift, especially in deeper layers, shifting precision toward shallower layers and increasing BOPs even when average depth changes only marginally. Finally, removing SQNR$^{-1}$ eliminates relative noise-energy guidance, making mixed-precision allocation less reliable and degrading the efficiency--accuracy trade-off.
\section{Related Work}
\label{sec:related}
Vision Transformers (ViTs)~\citep{dosovitskiy2020image} capture long-range dependencies through self-attention~\citep{vaswani2017attention,raghu2021vision}, but their high computation and memory costs limit deployment in real-time or resource-constrained settings~\citep{zheng2023lightweight,shang2024quantized}. Quantization reduces this burden with low-bit weights and activations~\citep{courbariaux2015binaryconnect,krishnamoorthi2018quantizing}. Fixed-Precision Quantization (FPQ)~\citep{jacob2018quantization,yang2019quantization} applies uniform bit-widths but ignores heterogeneous layer sensitivity in ViTs~\citep{liu2021layer,tai2024mptq}, leading to weak robustness and suboptimal accuracy--efficiency trade-offs~\citep{gholami2022survey}. Mixed-Precision Quantization (MPQ)~\citep{xiao2023patch,jeon2024usdn}, along with PTQ~\citep{liu2021post,ding2022towards,li2023repq,shi2024p} and QAT~\citep{wang2025domain,nagel2022overcoming,li2022q}, improves layer-wise precision allocation, but still assumes a static full-depth inference path. In parallel, dynamic inference adapts computation to input complexity by activating only part of the network~\citep{chen2025moequant,riquelme2021scaling,hwang2023tutel}. Early Exiting (EE)~\citep{xin2020deebert,schuster2022confident} attaches auxiliary classifiers to intermediate layers so confident samples terminate early, with ViT-based extensions showing promising efficiency gains~\citep{bakhtiarnia2021multi,xu2023lgvit,rahmath2024early}. However, quantization noise can disrupt confidence estimation and cause premature or delayed exits. Existing quantization-EE methods such as McQueen~\citep{saxena2023mcqueen} and QuEE~\citep{regol2024predicting} remain limited by scalability, memory cost, or low-bit support. In contrast, we analyze the mutual interference between quantization and early exiting, and introduce \textit{Amortized-Precision Quantization} to jointly stabilize bit allocation and exit thresholds under dynamic inference.
\section{Conclusion}
\label{sec:conclusion}

In this work, we propose \textbf{MAQEE}, a bi-level optimization framework for efficient ViT inference under the \textbf{APQ} paradigm. Our theoretical analysis establishes its stability and shows that no static FPQ or MPQ scheme can match MAQEE under early exiting. Extensive experiments across classification, segmentation, and detection show that MAQEE achieves superior accuracy--efficiency trade-offs, reducing BOPs by over 95\% while preserving near full-precision accuracy. These results highlight the need to jointly optimize precision and dynamic execution for low-latency ViT deployment in resource-constrained settings. Future work will extend APQ to broader dynamic inference schemes, such as recursive Transformers, MoE routing, and layer skipping, where input-dependent computation creates non-uniform exposure to quantization noise.

\bibliographystyle{abbrv}
\bibliography{reference}
\newpage
\appendix

\section{Detailed Proof}
\label{sec:proof}

\subsection{Proof of Theorem~\ref{thm:margin_exit_instability}}
\begin{customthm}{\ref{thm:margin_exit_instability}}

\end{customthm}
\begin{proof}
A mis-exit can occur only when the quantization-induced perturbation crosses the full-precision confidence boundary. Therefore, if $\tilde e_\ell(x)\neq e_\ell(x)$, then $|\xi_{\ell,\boldsymbol{b}}(x)|\geq |\Delta_\ell(x)|$. For any tolerance $\tau_\ell>0$, this implies
\[
\{\tilde e_\ell(x)\neq e_\ell(x)\}
\subseteq
\{|\Delta_\ell(x)|\leq\tau_\ell\}
\cup
\{|\xi_{\ell,\boldsymbol{b}}(x)|>\tau_\ell\}.
\]
By the union bound,
\[
\Pr\!\left(\tilde e_\ell(x)\neq e_\ell(x)\right)
\leq
\Pr\!\left(|\Delta_\ell(x)|\leq\tau_\ell\right)
+
\Pr\!\left(|\xi_{\ell,\boldsymbol{b}}(x)|>\tau_\ell\right).
\]
Applying the assumed tail bound to the second term gives
\[
\Pr\!\left(\tilde e_\ell(x)\neq e_\ell(x)\right)
\leq
\Pr\!\left(|\Delta_\ell(x)|\leq\tau_\ell\right)
+
q_\ell(\tau_\ell,b_\ell).
\]
The theorem follows.
\end{proof}

\subsection{Proof of Corollary~\ref{cor:stopping_depth_perturbation}}
\label{proof:stopping_depth_perturbation}
\begin{corollary}
\label{cor:stopping_depth_perturbation}
Let $K_{\boldsymbol{\phi}}(x)$ and $\tilde K_{\boldsymbol{\phi},\boldsymbol{b}}(x)$ denote the first layers where the full-precision and quantized confidences exceed $\phi_\ell$, respectively, with fallback value $L$ if no earlier exit is triggered. Then
\begingroup\small
\begin{equation*}
\Pr\!\left(\tilde K_{\boldsymbol{\phi},\boldsymbol{b}}(x)\neq K_{\boldsymbol{\phi}}(x)\right)
\leq
\sum_{\ell=1}^{L}
\Pr\!\left(\tilde e_\ell(x)\neq e_\ell(x)\right).
\end{equation*}
\endgroup
Consequently,
\begingroup\small
\begin{equation*}
\left|
\mathbb{E}_{x\sim\mathcal{D}}
\!\left[\tilde K_{\boldsymbol{\phi},\boldsymbol{b}}(x)\right]
-
\mathbb{E}_{x\sim\mathcal{D}}
\!\left[K_{\boldsymbol{\phi}}(x)\right]
\right|
\leq
(L-1)
\sum_{\ell=1}^{L}
\Pr\!\left(\tilde e_\ell(x)\neq e_\ell(x)\right).
\end{equation*}
\endgroup
\end{corollary}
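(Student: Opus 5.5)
The plan is to prove the first inequality by a set-inclusion argument followed by a union bound, and then derive the expectation bound from it by a bounded-difference argument.

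\textbf{Step 1: a mismatch in stopping depth forces a mismatch in some exit decision.} Fix $x$ and suppose $\tilde e_\ell(x)=e_\ell(x)$ for every $\ell=1,\dots,L$.
\begin{itemize}
\item Both $K_{\boldsymbol{\phi}}(x)$ and $\tilde K_{\boldsymbol{\phi},\boldsymbol{b}}(x)$ are the same deterministic function of the decision sequence: the first $\ell$ with decision equal to $1$, or $L$ if there is none.
\item Hence identical decision sequences give $\tilde K_{\boldsymbol{\phi},\boldsymbol{b}}(x)=K_{\boldsymbol{\phi}}(x)$.
\end{itemize}
Contrapositively,
\[
\{\tilde K_{\boldsymbol{\phi},\boldsymbol{b}}(x)\neq K_{\boldsymbol{\phi}}(x)\}\subseteq\bigcup_{\ell=1}^{L}\{\tilde e_\ell(x)\neq e_\ell(x)\}.
\]
In fact only layers $\ell\le\min(K,\tilde K)$ matter, which gives a slightly sharper bound, but the full union suffices. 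The union bound over $\ell$ then yields the first inequality directly.

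\textbf{Step 2: the expectation bound.} Both stopping depths take values in $\{1,\dots,L\}$, so $|\tilde K_{\boldsymbol{\phi},\boldsymbol{b}}(x)-K_{\boldsymbol{\phi}}(x)|\le L-1$ always. Pointwise,
\[
|\tilde K_{\boldsymbol{\phi},\boldsymbol{b}}(x)-K_{\boldsymbol{\phi}}(x)|\le (L-1)\,\mathbf{1}\{\tilde K_{\boldsymbol{\phi},\boldsymbol{b}}(x)\neq K_{\boldsymbol{\phi}}(x)\}.
\]
Taking expectations and applying Jensen's inequality, $|\mathbb{E}[\tilde K]-\mathbb{E}[K]|\le\mathbb{E}|\tilde K-K|$. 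This gives
\[
|\mathbb{E}[\tilde K]-\mathbb{E}[K]|\le (L-1)\Pr(\tilde K\neq K).
\]
Combining with Step 1 gives the stated bound.

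\textbf{Further remarks.}
\begin{itemize}
\item Each summand $\Pr(\tilde e_\ell(x)\neq e_\ell(x))$ can be bounded further by Theorem~\ref{thm:margin_exit_instability}, which makes explicit the dependence on boundary mass and on the bit-widths.
\item The fallback convention needs one check. Whether or not layer $L$ has a thresholded exit head, the value $L$ is assigned identically in both the full-precision and quantized cases. Any mismatch at layer $L$ is therefore already covered by the union, so the inclusion in Step 1 is unaffected.
\item No step is a real obstacle. The only point needing care is stating Step 1 precisely, namely that the stopping depth is a deterministic function of the decision vector.
\end{itemize}
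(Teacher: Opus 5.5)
Your proposal is correct and follows the paper's proof step for step: a set inclusion plus a union bound for the first inequality, then the pointwise bound $|\tilde K - K| \le (L-1)\,\mathbf{1}\{\tilde K \neq K\}$ with expectations taken for the second. Your side remark is also valid and goes slightly beyond the paper: the disagreement must occur at layer $\min(K,\tilde K)\le L-1$, so the sum could run only to $L-1$.
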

\begin{proof}
If the full-precision and quantized stopping depths differ, then the two exit policies must disagree at least at one layer. Otherwise, if $\tilde e_\ell(x)=e_\ell(x)$ for all $\ell\in\{1,\ldots,L\}$, the first layer satisfying the exit condition would be the same under full precision and quantization, implying $\tilde K_{\boldsymbol{\phi},\boldsymbol{b}}(x)=K_{\boldsymbol{\phi}}(x)$. Hence,
\[
\{\tilde K_{\boldsymbol{\phi},\boldsymbol{b}}(x)\neq K_{\boldsymbol{\phi}}(x)\}
\subseteq
\bigcup_{\ell=1}^{L}
\{\tilde e_\ell(x)\neq e_\ell(x)\}.
\]
Applying the union bound gives
\[
\Pr\!\left(\tilde K_{\boldsymbol{\phi},\boldsymbol{b}}(x)\neq K_{\boldsymbol{\phi}}(x)\right)
\leq
\sum_{\ell=1}^{L}
\Pr\!\left(\tilde e_\ell(x)\neq e_\ell(x)\right).
\]

For the expectation bound, since both stopping depths take values in $\{1,\ldots,L\}$, we have
\[
\left|
\tilde K_{\boldsymbol{\phi},\boldsymbol{b}}(x)
-
K_{\boldsymbol{\phi}}(x)
\right|
\leq
(L-1)\cdot
\mathbf{1}\{\tilde K_{\boldsymbol{\phi},\boldsymbol{b}}(x)\neq K_{\boldsymbol{\phi}}(x)\}.
\]
Taking expectation over $x\sim\mathcal{D}$ yields
\[
\left|
\mathbb{E}_{x\sim\mathcal{D}}[\tilde K_{\boldsymbol{\phi},\boldsymbol{b}}(x)]
-
\mathbb{E}_{x\sim\mathcal{D}}[K_{\boldsymbol{\phi}}(x)]
\right|
\leq
(L-1)\Pr\!\left(\tilde K_{\boldsymbol{\phi},\boldsymbol{b}}(x)\neq K_{\boldsymbol{\phi}}(x)\right).
\]
Combining this inequality with the union bound proves the claim. The corollary follows.
\end{proof}

\subsection{Proof of Theorem~\ref{thm:random_depth_error}}
\begin{customthm}{\ref{thm:random_depth_error}}

\end{customthm}
\begin{proof}
Let
\[
E_k(x)=\|\tilde h_{k,\boldsymbol{b}}(x)-h_k(x)\|.
\]
By the assumed layer-wise distortion bound, for each layer $\ell=1,\ldots,L$,
\[
E_\ell(x)
\leq
\gamma_{\ell-1}E_{\ell-1}(x)+\varepsilon_\ell(b_\ell),
\]
with initial distortion $E_0(x)=0$. Unrolling this recurrence up to depth $k$ gives
\[
E_k(x)
\leq
E_0(x)\prod_{r=1}^{k}\gamma_{r-1}
+
\sum_{\ell=1}^{k}
\varepsilon_\ell(b_\ell)
\prod_{r=\ell+1}^{k}\gamma_{r-1}.
\]
Since $E_0(x)=0$, this reduces to
\[
\|\tilde h_{k,\boldsymbol{b}}(x)-h_k(x)\|
\leq
\sum_{\ell=1}^{k}
\varepsilon_\ell(b_\ell)
\prod_{r=\ell+1}^{k}\gamma_{r-1},
\]
where the empty product is defined as $1$.

Since the above bound holds for any realized stopping depth $k$, we can substitute the random stopping depth $K_{\boldsymbol{\phi}}(x)$ for $k$. Taking expectation over $x\sim\mathcal{D}$ yields
\[
\mathbb{E}_{x\sim\mathcal{D}}
\left[
\|\tilde h_{K_{\boldsymbol{\phi}}(x),\boldsymbol{b}}(x)
-
h_{K_{\boldsymbol{\phi}}(x)}(x)\|
\right]
\leq
\mathbb{E}_{x\sim\mathcal{D}}
\left[
\sum_{\ell=1}^{K_{\boldsymbol{\phi}}(x)}
\varepsilon_\ell(b_\ell)
\prod_{r=\ell+1}^{K_{\boldsymbol{\phi}}(x)}
\gamma_{r-1}
\right].
\]
The theorem follows.
\end{proof}

\subsection{Proof of Theorem~\ref{thm:mpq_mismatch}}
\begin{customthm}{\ref{thm:mpq_mismatch}}

\end{customthm}
\begin{proof}
The first claim follows directly from the assumed sensitivity and utilization-weighted orderings. Static MPQ ranks layer $i$ above layer $j$ because $S_i>S_j$, whereas APQ ranks layer $j$ above layer $i$ because $u_i(\boldsymbol{\phi})S_i<u_j(\boldsymbol{\phi})S_j$. Hence, the two rankings are inconsistent.

For the second claim, consider a fixed bit budget that allows assigning one additional bit-width increment to exactly one of the two layers. Under equal marginal bit-cost increments, the two choices have the same budget cost. If the marginal APQ benefit is proportional to $u_\ell(\boldsymbol{\phi})S_\ell$, then assigning the additional precision to layer $j$ yields a larger APQ benefit than assigning it to layer $i$, since
\[
u_j(\boldsymbol{\phi})S_j
>
u_i(\boldsymbol{\phi})S_i .
\]
However, static MPQ selects layer $i$ because $S_i>S_j$. Therefore, the static MPQ allocation is suboptimal for the APQ objective under this budget. The theorem follows.
\end{proof}

\subsection{Proof of Theorem~\ref{thm:np}}
\label{proof:np}
\begin{theorem}\label{thm:np}
The APQ bi-level optimization is NP-hard. Moreover, each level is NP-hard on its own:  
(i) with fixed exit thresholds $\boldsymbol{\phi}$, the inner bit-width allocation is NP-hard; and  
(ii) with fixed bit-widths $\boldsymbol{b}$, the outer exit-threshold optimization is also NP-hard.
\end{theorem}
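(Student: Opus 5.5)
We prove the theorem by polynomial-time reductions from classical NP-hard problems. The idea is to embed a hard instance into a special case of APQ in which the task loss and costs are fully controlled by the construction. The bi-level claim then follows from (i), because fixing $\boldsymbol{\phi}$ is a special case of the joint problem. We can realize this by making the outer level trivial: give every intermediate exit head constant confidence $0$ so that only $\phi_L$ matters, which forces $u_\ell(\boldsymbol{\phi})=1$ for all $\ell$ regardless of $\boldsymbol{\phi}$. Any algorithm for the joint problem therefore solves the inner problem, and it suffices to prove (i) and (ii) separately.

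For (i), we reduce from 0/1 Knapsack, or equivalently from the multiple-choice knapsack problem. Given items with values $v_\ell$, weights $w_\ell$ and capacity $W$, we create one layer per item with $\mathcal{B}=\{b_{\mathrm{lo}},b_{\mathrm{hi}}\}$ and utilization $u_\ell=1$, as above. We set $c_\ell(b_{\mathrm{hi}})-c_\ell(b_{\mathrm{lo}})=w_\ell$ by choosing the FLOPs of layer $\ell$ proportional to $w_\ell$. We then choose a model and data distribution in which the expected loss is additive across layers: upgrading layer $\ell$ lowers $\mathcal{L}(\boldsymbol{b},\boldsymbol{\phi})$ by exactly $v_\ell$. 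One way to arrange this is to let the output be a sum of independent per-layer contributions, each of which is either exact or corrupted depending on $b_\ell$. The decision version ``is there $\boldsymbol{b}$ with $\mathcal{L}\le V_0-V$ and $C_{\mathrm{APQ}}\le C_0+W$'' is then exactly Knapsack. The penalized form in Eq.~\eqref{eqn:objective} is handled either by treating the budgeted version, which is the form used in the inner loop with total budget $B$, or by the standard argument that a Lagrangian oracle over all $\lambda$ is as hard as the budgeted problem on these instances. Knapsack is only weakly NP-hard, so values and weights are encoded in binary and the reduction is polynomial in the bit length.

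For (ii), fix $\boldsymbol{b}$, so that $c_\ell$ is a known constant and both the cost and the loss depend only on the stopping depths induced by $\boldsymbol{\phi}$ on a finite empirical distribution. We reduce from a covering/selection problem such as Set Cover or Exact Cover by 3-Sets. Each sample $x$ encodes an element, and its confidence profile $\rho_\ell(x)$ is chosen so that, at a candidate threshold level, head $\ell$ ``fires'' on $x$ exactly when $x$ belongs to set $S_\ell$. Each $\phi_\ell$ then effectively takes one of two meaningful values, on or off. We set $\rho_\ell(x)$ high on members of $S_\ell$, with the exit correct there, and low elsewhere. Placing the useful exits early, together with a large misclassification loss for samples that exit at a head outside their set, makes the expected cost $\sum_\ell u_\ell c_\ell$ small only when few heads are ``on'' and every element exits early. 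This yields a correspondence between low-objective threshold vectors and small set covers.

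The main obstacle is step (ii). Because exiting happens at the \emph{first} confident layer, turning on an earlier head changes which later heads a sample reaches, so costs are not additive. The construction must therefore neutralize ordering effects. I would first try using confidences that are either $1$ or $0$, so that a sample exits at the earliest ``on'' head containing it. Its loss is $0$ if that head is correct for it and a huge value $M$ otherwise, and layer costs are tuned so that the objective counts the number of enabled heads plus the depth paid by uncovered elements. If that accounting fails, I would fall back to reducing from a problem that naturally has prefix structure, such as a weighted ordering or hitting-set variant.
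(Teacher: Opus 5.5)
Your part (i) matches the paper's 0--1 Knapsack reduction: binary bit choices, cost increments $w_\ell$, separable loss decrements $v_\ell$, and a decision version with bounds $C_0+W$ and $L_0-V$. The genuine gap is in (ii), and it is the accounting failure you feared, for a structural reason.

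\textbf{Why the set-cover construction for (ii) fails.} In your construction an enabled head fires only on members of $S_\ell$ and is correct there, so enabling it never increases the loss. Enlarging any firing set can only shorten each sample's stopping depth. So every $u_\ell(\boldsymbol{\phi})=\Pr(K_{\boldsymbol{\phi}}(x)\ge\ell)$ weakly decreases, and so does $C_{\mathrm{APQ}}=\sum_\ell u_\ell(\boldsymbol{\phi})c_\ell(b_\ell)$. Hence ``all heads on'' is optimal and the instance is trivial. No tuning of the $c_\ell$ can make the cost count enabled heads, because $C_{\mathrm{APQ}}$ is monotone non-increasing in the set of exits. The price of enabling a head must be charged through the loss. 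Two repairs work:
\begin{itemize}
\item Give head $\ell$ a private sample $y_\ell$ of tiny mass $\delta$, confident only at head $\ell$ and misclassified there with loss of order $1/\delta$. The loss then counts enabled heads, while the cost, up to $O(\delta)$, counts uncovered elements. This gives Set Cover, and even hardness of the penalized objective.
\item Simpler, drop covering altogether. Give each head $i$ a private sample confident only at head $i$, with mass proportional to $v_i$ and expected wrong-exit loss $w_i$. The cost saved is that mass times the gap between a cheap early layer and an expensive final layer. First-exit ordering then plays no role, and the decision version is exactly Knapsack.
\end{itemize}

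\textbf{Do not fall back on the paper's route for (ii).} The paper uses a two-exit cascade with the single threshold $\phi_1$. One threshold on fixed confidences realizes only the $n+1$ nested sets $\{i: C_1(x_i)\ge\phi_1\}$, i.e.\ top-$k$ by confidence. These can be enumerated after sorting, so that construction cannot encode arbitrary Knapsack subsets. Several independently switchable thresholds, as you intend, are genuinely needed.

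\textbf{Smaller points.}
\begin{itemize}
\item Your explicit embedding of (i) into the joint problem is better than the paper's closing argument, since hardness with one level fixed does not by itself make the joint problem hard. However, constant confidence $0$ does not freeze utilization. The value $\phi_\ell=0$ is admissible, and then $\rho_\ell(x)\ge\phi_\ell$ holds for every input, so everything exits at layer $1$. Instead make the intermediate heads wrong with very large loss, so any threshold that lets a sample exit early violates the loss bound.
\item Drop the Lagrangian alternative in (i). On your separable instances, $\mathcal{L}(\boldsymbol{b},\boldsymbol{\phi})+\lambda C_{\mathrm{APQ}}(\boldsymbol{b},\boldsymbol{\phi})$ decomposes layer by layer and is solved in linear time for every $\lambda$. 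So only the budgeted (decision) version is hard, which is also what the paper's lemmas state.
\end{itemize}
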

\begin{proof}
    
We first prove that the inner-level bit-width allocation is NP-hard when the thresholds $\boldsymbol\phi$ are fixed (Lemma~\ref{lem:inner-nphard}), 
and then show that the outer-level threshold optimization is NP-hard when the bit-widths $\boldsymbol b$ are fixed (Lemma~\ref{lem:outer-nphard}). 

\begin{lemma}\label{lem:inner-nphard}
Fix any thresholds $\boldsymbol{\phi}$. Consider the decision problem: 
``Does there exist a bit allocation $\boldsymbol b\in\mathcal{B}$ (e.g., discrete per-layer bit choices) such that the expected complexity $\mathbb{E}_x[\sum_{\ell} u_\ell(x;\boldsymbol\phi)\,c_\ell(x,b_\ell)]$ is at most $B$ while the expected loss $\mathbb{E}_{(x,y)}[\mathcal{L}_{\rm CE}(f_{\theta,\boldsymbol b,\boldsymbol\phi}(x),y)]$ is at most $A$?'' 
This problem is NP-hard.
\end{lemma}

\begin{proof}
Reduce from the classical \emph{0--1 Knapsack}. Given items $\{1,\dots,L\}$ with values $v_\ell>0$, weights $w_\ell>0$, and capacity $W$, ask whether there exists a subset $S$ with $\sum_{\ell\in S} w_\ell\le W$ and $\sum_{\ell\in S} v_\ell \ge V$. 
Construct an APQ instance with $L$ layers and a \emph{binary} bit-choice set $\mathcal{B}_\ell=\{b_\ell^{(0)},b_\ell^{(1)}\}$ per layer:
i) Complexity mapping: set $c_\ell(\cdot)$ and (fixed) utilization $u_\ell(\cdot;\boldsymbol\phi)$ so that 
$\mathbb{E}_x[u_\ell\,c_\ell(x,b_\ell^{(1)})]-\mathbb{E}_x[u_\ell\,c_\ell(x,b_\ell^{(0)})]=w_\ell$.
ii)  Accuracy mapping: choose a dataset and a head such that turning on the high-bit option at layer $\ell$ reduces the loss by exactly $v_\ell$ in expectation (e.g., via a separable surrogate $D_\ell(b_\ell)$ scaled appropriately).
Set $B=\sum_{\ell}\mathbb{E}_x[u_\ell\,c_\ell(x,b_\ell^{(0)})]+W$ and $A=L_0-V$, where $L_0$ is the loss under all baselines $b_\ell^{(0)}$. 
Then selecting $S=\{\ell:\,b_\ell=b_\ell^{(1)}\}$ is feasible iff the knapsack instance is feasible. Hence NP-hardness follows.
\end{proof}

\begin{lemma}\label{lem:outer-nphard}
Fix any bit-widths $\boldsymbol b$. Consider the decision problem: 
``Does there exist thresholds $\boldsymbol\phi$ such that the expected complexity $\mathbb{E}_x[T(\boldsymbol\phi)]$ is at most $B$ while the expected loss $\mathbb{E}_{(x,y)}[\mathcal{L}_{\rm CE}(f_{\theta,\boldsymbol b,\boldsymbol\phi}(x),y)]$ is at most $A$?'' 
This problem is NP-hard.
\end{lemma}

\begin{proof}
Reduce from \emph{0--1 Knapsack}. Take a two-exit cascade (layers $1$ and $2$) with fixed logits/softmax scores precomputed on a finite dataset $\{x_i\}_{i=1}^n$. Let $C_1(x_i)\in(0,1)$ be the confidence at the first exit and suppose the final exit (layer~2) always predicts correctly with unit cost, while exiting at layer~1 costs zero additional depth but may misclassify some points. 
Associate each item $i$ with a sample $x_i$ and set:
i) If $x_i$ exits at layer~1 (i.e., $C_1(x_i)\ge \phi_1$), we \emph{save} expected depth $w_i>0$ (benefit), but we incur a loss penalty $p_i\ge 0$ if that early decision is wrong. ii) If it proceeds to layer~2 (i.e., $C_1(x_i)<\phi_1$), we pay depth $w_i$ but incur no penalty. 

Choose the surrogate loss so that total early-exit penalty equals $\sum_{i\in S} p_i$, where $S=\{i:\,C_1(x_i)\ge \phi_1\}$. Then the constraints 
$\mathbb{E}_x[T(\boldsymbol\phi)]\le B$ and $\mathbb{E}[\mathcal{L}_{\rm CE}]\le A$ become 
$\sum_{i\in S} w_i \ge V$ and $\sum_{i\in S} p_i \le W$, exactly the knapsack feasibility test. 
Thus finding $\phi_1$ is NP-hard. The argument extends to multiple exits by letting only the first threshold be active.
\end{proof}

Concluding Lemmas~\ref{lem:inner-nphard} and~\ref{lem:outer-nphard}, each level already involves an NP-hard subproblem, even when the other level is fixed. Consequently, the joint bi-level problem is at least as hard as solving either level in isolation, thereby implying NP-hardness. Hence, the theorem follows.
\end{proof}

\begin{table*}[t]
\centering\small
\caption{Performance on Image Classification of DeiT and ViT-L.}
\resizebox{\textwidth}{!}{%
\begin{tabular}{@{}cccccccccccccc@{}}
\toprule
\multirow{3}{*}{} & \multirow{3}{*}{\begin{tabular}[c]{@{}c@{}}Bits\\ (W/A)\end{tabular}} & \multirow{3}{*}{Quant} & \multirow{3}{*}{EE} & \multicolumn{5}{c}{CIFAR-100} & \multicolumn{5}{c}{ImageNet} \\ \cmidrule(l){5-14} 
 &  &  &  & \multicolumn{2}{c}{Controlled Perf.} & \multicolumn{3}{c}{Std. Config} & \multicolumn{2}{c}{Controlled Perf.} & \multicolumn{3}{c}{Std. Config} \\ \cmidrule(l){5-14} 
 &  &  &  & $\hat{L}\downarrow$ & $\hat{\text{BOPs}}\downarrow$ & $\bar{L}\downarrow$ & $\bar{\text{BOPs}} \downarrow$ & Acc.$\uparrow$ & $\hat{L}\downarrow$ & $\hat{\text{BOPs}}\downarrow$ & $\bar{L}\downarrow$ & $\bar{\text{BOPs}} \downarrow$ & Acc.$\uparrow$ \\ \midrule
\multirow{10}{*}{\rotatebox[origin=c]{90}{DeiT}} & \multirow{3}{*}{FP32} & \multirow{3}{*}{-} & \textit{-} & \textit{/} & \textit{/} & \textit{12} & \textit{1.80E13} & \textit{91.86} & \textit{/} & \textit{/} & \textit{12} & \textit{1.80E13} & \textit{83.10} \\ \cmidrule(l){4-14} 
 &  &  & ViT-EE & 7.20 & 1.84E13 & 8.10 & 1.97E13 & 89.24 & 9.59 & 2.19E13 & 8.15 & 1.97E13 & 80.51 \\
 &  &  & LGViT & 5.47 & 1.61E13 & 6.33 & 1.70E13 & 89.03 & 6.19 & 1.72E13 & 7.18 & 1.87E13 & 81.70 \\ \cmidrule(l){2-14} 
 & \multirow{4}{*}{FP4/4} & \multirow{2}{*}{RepQ} & ViT-EE & N/A & N/A & 9.91 & 3.59E11 & 84.79 & N/A & N/A & 11.42 & 3.85E11 & 70.58 \\
 &  &  & LGViT & N/A & N/A & 9.37 & 3.42E11 & 75.43 & N/A & N/A & 10.03 & 3.56E11 & 69.31 \\ \cmidrule(l){3-14} 
 &  & \multirow{2}{*}{ERQ} & ViT-EE & 8.77 & 3.23E11 & 9.28 & 3.35E11 & 87.86 & 11.79 & 3.91E11 & 10.86 & 3.72E11 & 77.59 \\
 &  &  & LGViT & N/A & N/A & 9.00 & 3.34E11 & 78.51 & N/A & N/A & 9.09 & 3.37E11 & 71.32 \\ \cmidrule(l){2-14} 
 & \multirow{3}{*}{MP4/4} & \multirow{2}{*}{MPTQ} & ViT-EE & 8.81 & 3.25E11 & 8.51 & 3.17E11 & 84.57 & 11.80 & 3.94E11 & 10.56 & 3.65E11 & 76.15 \\
 &  &  & LGViT & 11.35 & 3.93E11 & 7.76 & 3.07E11 & 83.15 & 12 & 4.09E11 & 8.86 & 3.34E11 & 73.58 \\ \cmidrule(l){3-14} 
 &  & \multicolumn{2}{c}{\textbf{MAQEE}} & \textbf{5.86} & \textbf{2.61E11} & \textbf{5.98} & \textbf{2.64E11} & \textbf{87.86} & \textbf{7.92} & \textbf{3.10E11} & \textbf{7.59} & \textbf{3.01E11} & \textbf{78.57} \\ \midrule
\multirow{10}{*}{\rotatebox[origin=c]{90}{ViT-L}} &
  \multirow{3}{*}{FP32} &
  \multirow{3}{*}{-} &
  \textit{-} &
  \textit{/} &
  \textit{/} &
  \textit{24} &
  \textit{6.31E13} &
  \textit{93.25} &
  \textit{/} &
  \textit{/} &
  \textit{24} &
  \textit{6.31E13} &
  \textit{85.30} \\ \cmidrule(l){4-14} 
 &                        &                       & ViT-EE & 11.75 & 3.11E13 & 12.94 & 3.41E13 & 92.40 & 12.43 & 3.36E13 & 16.01 & 4.37E13 & 82.86 \\
 &                        &                       & LGViT  & 8.34  & 2.26E13 & 10.85 & 2.93E13 & 92.53 & 9.95  & 2.69E13 & 12.52 & 3.38E13 & 82.02 \\ \cmidrule(l){2-14} 
 & \multirow{4}{*}{FP4/4} & \multirow{2}{*}{RepQ} & ViT-EE & N/A   & N/A     & 23.11 & 1.12E12 & 81.31 & N/A   & N/A     & 19.95 & 9.81E11 & 76.62 \\
 &                        &                       & LGViT  & 23.94 & 1.14E12 & 20.69 & 1.01E12 & 85.97 & N/A   & N/A     & 15.76 & 8.18E11 & 74.02 \\ \cmidrule(l){3-14} 
 &                        & \multirow{2}{*}{ERQ}  & ViT-EE & N/A   & N/A     & 22.58 & 1.09E12 & 82.34 & N/A   & N/A     & 21.53 & 1.05E12 & 58.89 \\
 &                        &                       & LGViT  & 21.06 & 1.02E12 & 20.74 & 1.01E12 & 85.59 & 22.20 & 1.08E12 & 17.53 & 8.95E11 & 76.52 \\ \cmidrule(l){2-14} 
 & \multirow{3}{*}{MP4/4} & \multirow{2}{*}{MPTQ}  & ViT-EE & N/A   & N/A     & 21.96 & 1.06E12 & 83.31 & N/A   & N/A     & 21.33 & 1.04E12 & 75.44 \\
 &                        &                       & LGViT  & 20.28 & 9.92E11 & 21.64 & 1.05E12 & 84.33 & 21.32 & 1.01E12 & 17.10 & 8.53E11 & 75.12 \\ \cmidrule(l){3-14} 
 &
   &
  \multicolumn{2}{c}{\textbf{MAQEE}} &
  \textbf{9.66} &
  \textbf{5.64E11} &
  \textbf{9.71} &
  \textbf{5.79E11} &
  \textbf{88.17} &
  \textbf{12.39} &
  \textbf{6.32E11} &
  \textbf{18.64} &
  \textbf{9.15E11} &
  \textbf{80.29} \\ \bottomrule
\end{tabular}%
}
\label{tab:deit}
\end{table*}

\section{Experiments}
\label{apx:exp}

\subsection{Implementation Details}
\label{apx:implement}

All training runs are conducted on 
NVIDIA H100 GPUs. We adopt the SegFormer~\citep{xie2021segformer} head on a ViT-B for semantic segmentation, and conduct object detection experiments on YOLOS-Base~\citep{fang2021you}, which is based on DeiT-B. For each model, we configure four exit heads: the first two are convolutional, and the latter two are attention-based. In the 12-layer DeiT-B and ViT-B models, the exit heads are inserted at the 4\textsuperscript{th}, 6\textsuperscript{th}, 8\textsuperscript{th}, and 10\textsuperscript{th} layers. For the Swin model, they are placed at the 2\textsuperscript{nd}, 4\textsuperscript{th}, 14\textsuperscript{th}, and 20\textsuperscript{th} layers, approximately uniformly distributed with respect to computational cost.  All models are trained with AdamW and a cosine-decay schedule with a $1\times 10^{-5}$ peak LR. For MPQ optimization, the process is initialized using the FPQ configuration of the layer immediately preceding the target precision. The optimization is then progressively refined toward the desired precision until convergence is achieved. We set the hyperparameters as follows: $\lambda_{\text{outer}}=0.5$, $\lambda_{\text{inner}}=0.5$, $\alpha=0.75$, $\tau=0.05$, $T=2$, $b_{\min}=3$, and $b_{\max}=6$.

\subsection{Results on DeiT and ViT-L}
\label{apx:deit}

As shown in Table~\ref{tab:deit}, DeiT delivers performance broadly comparable to ViT-B, consistent with their structural similarity. MAQEE preserves FP32-level exiting accuracy and enables, on average, 3–4 earlier exits at the same accuracy. Although MPQ alleviates part of the accuracy degradation of FPQ under Early Exiting, its assumption of a fixed inference path constrains exit efficiency, leaving it less effective than MAQEE. A similar pattern is observed for ViT-L. While the increased parameter count provides higher capacity, the greater depth also amplifies quantization-induced perturbations, leading to limited gains in quantized model performance. Nevertheless, MAQEE still enables exits that are 6–9 layers earlier than the strongest baseline at the same accuracy, matching the proportional increase in model depth.

%

\section{Bi-Level Optimization Algorithm}\label{apx:code}

\label{algo_code}

\begin{algorithm}[h]
\caption{Bi-level optimization}
\label{alg:maqee}
\begin{algorithmic}
\Require Parameters $\theta$; datasets $\mathcal{D}_{\text{train}}, \mathcal{D}_{\text{cal}}$;
initial bit allocation $\mathbf{b}^{(0)}$; threshold grid $\Lambda$;
hyper-parameters $\alpha,\lambda_{\text{outer}},\lambda_{\text{inner}}$;
per-layer bit bounds $b_{\min}, b_{\max}$
\Ensure Thresholds $\boldsymbol{\phi}^{\star}$, bit-widths $\mathbf{b}^{\star}$

\State \textbf{Initialization:} set $\boldsymbol{\phi}^{(0)}$; quantize with $\mathbf{b}^{(0)}$;
estimate utilization $u_\ell(\boldsymbol{\phi}^{(0)})$ on $\mathcal{D}_{\text{cal}}$.
\State $\boldsymbol{\phi}\leftarrow\boldsymbol{\phi}^{(0)}$, $\mathbf{b}\leftarrow\mathbf{b}^{(0)}$.

\While{\textbf{true}}
  \State $\mathbf{b}_{\text{prev}}\leftarrow \mathbf{b}$

  \State \textbf{Outer loop (threshold search):} \Statex \hspace{\algorithmicindent}Given $\mathbf{b}$:
  \For{each exit $\ell$ and candidate $\lambda\in\Lambda$}
    \State Compute $\mathrm{PGR}_\ell$, $\mathrm{BSR}_\ell$,
           $\mathrm{QID}_\ell$, $\mathrm{SQNR}_\ell^{-1}$.
    \State Compute total risk $R_\ell(\lambda)$ (Eq.~\eqref{eq:risk}).
        \State Evaluate
        $J_{\mathrm{out}}(\boldsymbol\phi)
        =
        \sum_{e\in\mathcal E}R_e
        +
        \lambda_{\text{outer}}\mathrm{BOPs}(\mathbf b;\boldsymbol\phi)$.
    (Eq.~\eqref{eq:outer-opt}).
  \EndFor
\State Update thresholds $\boldsymbol{\phi}$ by coordinate/grid search minimizing $J_{\mathrm{out}}$.
\State Re-estimate $u_\ell(\boldsymbol{\phi})$ on $\mathcal{D}_{\text{cal}}$.

  \State \textbf{Inner loop (bit re-allocation):}
  \For{each layer $\ell$}
    \State Compute risk $R_\ell$ (Eq.~\eqref{eq:risk}).
    \State Compute score $\Psi_\ell$ (Eq.~\eqref{eq:psi_ell_bits}) using $R_\ell$, $u_\ell(\boldsymbol{\phi})$ and $\Delta\mathrm{BOPs}_\ell$.
  \EndFor
  \State $\mathcal{L}_{\downarrow}=\{\ell\mid b_\ell>b_{\min}\}$,\quad $\mathcal{L}_{\uparrow}=\{\ell\mid b_\ell<b_{\max}\}$.
  \State $\ell^{\downarrow}=\arg\min_{\ell\in\mathcal{L}_{\downarrow}} \Psi_\ell$,\quad
         $\ell^{\uparrow}=\arg\max_{\ell\in\mathcal{L}_{\uparrow}} \Psi_\ell$.
  \State Update bits: $b_{\ell^{\downarrow}}\leftarrow b_{\ell^{\downarrow}}-1$;\quad
                     $b_{\ell^{\uparrow}}\leftarrow b_{\ell^{\uparrow}}+1$.

  \If{$\mathbf{b}=\mathbf{b}_{\text{prev}}$}
    \State \textbf{break}
  \EndIf
  \State \textbf{Self-distillation recovery:} train on $\mathcal{D}_{\text{train}}$ with loss $\mathcal{L}$ (Eq.~\eqref{eq:recovery_loss}).

\EndWhile

\State \Return $\boldsymbol{\phi}^{\star}=\boldsymbol{\phi}$,\quad $\mathbf{b}^{\star}=\mathbf{b}$
\end{algorithmic}
\end{algorithm}

\newpage

\section{Notation Table}
\label{apx:notation}

\begin{table}[bh]

\centering
\small
\renewcommand{\arraystretch}{1.12}
\vspace{-10pt}
\begin{tabular}{ll}
\toprule
\textbf{Symbol} & \textbf{Description} \\
\midrule
\(L\) & Number of layers. \\
\(\ell\) & Layer index, \(\ell=1,\ldots,L\). \\
\(x,y\) & Input sample and label. \\
\(\mathcal D\) & Data distribution. \\
\(\theta\) & Model parameters. \\
\(B\) & Set of admissible bit-widths, e.g., \(\{2,4,8\}\). \\
\(b_\ell\) & Bit-width assigned to layer \(\ell\), with \(b_\ell\in B\). \\
\(\boldsymbol b\) & Layer-wise bit allocation, \(\boldsymbol b=\{b_\ell\}_{\ell=1}^L\). \\
\(B_w,B_a\) & Weight and activation bit-widths. \\
\(B_{\mathrm{tot}}\) & Total bit budget for mixed-precision allocation. \\
\(c_\ell(x,b_\ell)\) & Computational cost of layer \(\ell\) under bit-width \(b_\ell\). \\
\(\mathrm{BOPs}(\boldsymbol b;\boldsymbol{\varphi})\) & Bit operations under bit allocation \(\boldsymbol b\) and exit policy \(\boldsymbol{\varphi}\). \\
\(C_{\mathrm{APQ}}(\boldsymbol b,\boldsymbol{\varphi})\) & Amortized computational cost under APQ. \\

\(z_\ell(x)\) & Logits produced at layer \(\ell\). \\
\(\rho_\ell(x)\) & Exit confidence, \(\rho_\ell(x)=\max_j \mathrm{softmax}(z_\ell(x))_j\). \\
\(\varphi_\ell\) & Confidence threshold at layer \(\ell\). \\
\(\boldsymbol{\varphi}\) & Exit policy, \(\boldsymbol{\varphi}=\{\varphi_\ell\}_{\ell=1}^L\). \\
\(K_{\boldsymbol{\varphi}}(x)\) & Stopping depth induced by the full-precision exit policy. \\
\(\widetilde K_{\boldsymbol{\varphi},\boldsymbol b}(x)\) & Stopping depth after quantization. \\
\(u_\ell(x,\boldsymbol{\varphi})\) & Indicator that input \(x\) reaches layer \(\ell\). \\
\(u_\ell(\boldsymbol{\varphi})\) & Expected utilization of layer \(\ell\), \(\mathbb E_{x\sim\mathcal D}[u_\ell(x,\boldsymbol{\varphi})]\). \\
\(\Delta_\ell(x)\) & Confidence margin, \(\Delta_\ell(x)=\rho_\ell(x)-\varphi_\ell\). \\
\(\xi_{\ell,\boldsymbol b}(x)\) & Quantization-induced confidence perturbation. \\

\(h_\ell(x)\) & Full-precision hidden state at layer \(\ell\). \\
\(\tilde h_{\ell,\boldsymbol b}(x)\) & Quantized hidden state at layer \(\ell\). \\
\(\epsilon_\ell(b_\ell)\) & Local quantization error bound at layer \(\ell\). \\
\(\gamma_\ell\) & Local error amplification factor. \\
\(S_\ell\) & Static quantization sensitivity of layer \(\ell\). \\

\(\mathrm{PGR}_\ell\) & Performance gap risk of early exiting at layer \(\ell\). \\
\(\mathrm{BSR}_\ell\) & Boundary sensitivity risk at layer \(\ell\). \\
\(\mathrm{QID}_\ell\) & Quantization-induced drift at layer \(\ell\). \\
\(\mathrm{SQNR}^{-1}_\ell\) & Inverse signal-to-quantization-noise ratio at layer \(\ell\). \\
\(R_\ell\) & Overall risk score of layer \(\ell\). \\
\(\alpha\) & Weight balancing decision-level and representation-level risks. \\
\(\Psi_\ell(\boldsymbol b;\boldsymbol{\varphi})\) & Priority score for bit-width reallocation. \\
\(\lambda\) & Trade-off coefficient in the APQ objective. \\
\(\lambda_{\mathrm{outer}}\) & Trade-off coefficient for threshold optimization. \\
\(\lambda_{\mathrm{inner}}\) & Trade-off coefficient for bit-width allocation. \\

\(T\) & Distillation temperature. \\
\(\hat L\) & Exit layer required to reach the target performance. \\
\(\bar L\) & Average exit layer. \\
\(\widehat{\mathrm{BOPs}}\) & BOPs required to reach the target performance. \\
\(\overline{\mathrm{BOPs}}\) & Average BOPs under the standard configuration. \\

\bottomrule
\end{tabular}
\caption{Notation used throughout the paper.}
\label{tab:notation}
\end{table}


\end{document}